\newcommand{\FF}{\mathcal{F}}
\newcommand{\RR}{\mathbb{R}}
\newcommand{\argmax}{\mathsf{argmax \;}}
\newcommand{\argmin}{\mathsf{argmin \;}}
\newtheorem{theorem}{Theorem}
\newtheorem{lemma}{Lemma}
\newtheorem{corollary}{Corollary}
\newcommand{\eps}{\epsilon}
\newcommand{\pr}[1]{\Pr\left[#1\right]}
\newcommand{\prOther}[2]{\mathop{\Pr}_{#1}\left[#2\right]}
\newcommand{\E}[2]{\mathop{\mathbb{E}}_{#1}\left[#2\right]}
\newcommand{\Var}[1]{\mathop{Var}\left[#1\right]}
\newcommand{\ip}[1]{\langle#1\rangle}
\newcommand{\norm}[1]{\|#1\|}
\title{Attention improves concentration when learning node embeddings}
\author{%
  Matthew Dippel\\
  Khoury College of Computer Science\\
  Northeastern University\\
  Boston, MA \\
  \texttt{mdippel@ccs.neu.edu}\\
  % examples of more authors
  \And
  Adam Kiezun \\
  Amazon\\
  Boston, MA \\
  \texttt{akkiezun@amazon.com} \\
  \And
  \hspace{2em}Tanay Mehta \\
  \hspace{2em}Khoury College of Computer Science\\
  \hspace{2em}Northeastern University\\
  \hspace{2em}Boston, MA \\
  \hspace{2em}\texttt{mehta.ta@northeastern.edu}\\
  \And
  Ravi Sundaram \\
  Khoury College of Computer Science\\
  Northeastern University\\
  Boston, MA \\
  \texttt{r.sundaram@northeastern.edu} \\
  \And
  \hspace{5em}Srikanth Thirumalai \\
  \hspace{5em}Amazon\\
  \hspace{5em}Palo Alto, CA\\
  \hspace{5em}\texttt{srikantt@amazon.com} \\
  \And
  \hspace{3em}Akshar Varma \\
  \hspace{3em}Khoury College of Computer Science\\
  \hspace{3em}Northeastern University\\
  \hspace{3em}Boston, MA \\
  \hspace{3em}\texttt{varma.ak@northeastern.edu} \\
  % Coauthor \\
  % Affiliation \\
  % Address \\
  % \texttt{email} \\
  % \AND
  % Coauthor \\
  % Affiliation \\
  % Address \\
  % \texttt{email} \\
  % \And
  % Coauthor \\
  % Affiliation \\
  % Address \\
  % \texttt{email} \\
  % \And
  % Coauthor \\
  % Affiliation \\
  % Address \\
  % \texttt{email} \\
}
\begin{document}

\maketitle
\begin{abstract}
We consider the problem of predicting edges in a graph from node attributes in an e-commerce setting. Specifically, given nodes labelled with search query text, we want to predict links to related queries that share products. Experiments with a range of deep neural architectures show that simple feedforward networks with an attention mechanism perform best for learning embeddings. The simplicity of these models allows us to explain the performance of attention.

We propose an analytically tractable model of query generation, AttEST, that views both products and the query text as vectors embedded in a latent space. We prove (and empirically validate) that the point-wise mutual information (PMI) matrix of the AttEST query text embeddings displays a low-rank behavior analogous to that observed in word embeddings. This low-rank property allows us to derive a loss function that maximizes the mutual information between related queries which is used to train an attention network to learn query embeddings. This AttEST network beats traditional memory-based LSTM architectures by over 20\% on F-1 score. We justify this out-performance by showing that the weights from the attention mechanism correlate strongly with the weights of the best linear unbiased estimator (BLUE) for the product vectors, and conclude that attention plays an important role in variance reduction.
\end{abstract}

\section{Introduction}
Graphs are used in various applications such as bioinformatics~\cite{fout2017protein}, recommender systems~\cite{ying2018graph}, social network analysis~\cite{Hamilton17Graphsage}, etc. An important learning problem on these graphs is to predict whether two nodes have an edge given information about other nodes in the graph. Solving this is crucial for tasks like metabolic network construction, movie recommendations, and knowledge graph completion.

One approach for solving such downstream tasks like link prediction is to employ low-dimensional vector representations of nodes generated by latent variable models. These techniques were originally developed for representing text and separately for representing images. Subsequently, these techniques have been combined~\cite{Hamilton17Graphsage} to generate embeddings for graphs that respect the semantic similarity of node features (text/images).

In this work, we address the link prediction problem in an e-commerce setting. Here we consider the \emph{query graph} consisting of nodes representing search queries entered by users seeking a specific product. There exists a link between two queries if users purchased the same product after making each query. The \emph{query reformulation problem} is to infer the links in the graph for a newly added node labeled by its query. For example, a user may enter the never-before seen query ``anxiety toy''. The system should infer that the user is searching for products also bought by searches for ``fidget spinners'', and consequently, there must be a link between the two queries in the query graph. It is easy to see from the example that purely syntactic (string matching) approaches are insufficient. Furthermore, note that queries cannot be considered as a bag-of-words; the sequence of words conveys important meaning. For instance, the queries ``milk chocolate'' and ``chocolate milk'' contain the same words but mean different products.

Existing latent variable approaches are unable to solve this problem (see Related work)). Keeping the above examples in mind, we approached the query reformulation problem using deep latent variable models that are sensitive to word sequences. In particular, we considered long short-term memory networks (LSTMs) as well as feedforward networks with attention. Indeed, more sophisticated models like BERT \cite{devlin-etal-2019-bert} require billions of parameters, which make them infeasible for training due to the large vocabulary size of commercial datasets. LSTMs and in particular the feedforward network with attention require fewer parameters and resources to train. Our experiments (see section~\ref{sec:experiments}) showed that attention networks perform very well and are significantly quicker to train.

This raises the question of why attention networks perform well. Indeed, there has been existing work investigating the limits of attention networks (see Related work). However, we are not aware of a sound analytical justification for the success of attention networks. Our main contribution is to offer a succinct, model-based explanation that may serve as a basis toward understanding more sophisticated models such as BERT.

\textbf{Our results}
We solve the query reformulation problem using a feed forward attention network with a cross-entropy-based loss function. For the purpose of judging the output of the models, we come up with a novel analog of the F1 score intended to capture both relevance of outputted queries as well as the diversity of the products that the reformulated queries lead to. We compare the attention-based approach to a hybrid method using graph embeddings and a long short-term memory network (LSTM). We also compare to a pure text based approach. We show that the attention mechanism beats (all reasonable variants of) these other approaches by over 20~\% on the F-1 score (Section~\ref{sec:experiments}).

We formulate a model for query generation, Attention Embeddings for Short Texts (AttEST), that matches statistical properties of queries and allows us to explain the success of attention networks with the cross-entropy loss function. Analogous to word embeddings, we show that the PMI of two queries is the dot product of their vector embeddings (see Corollary~\ref{cor:pmi-dot-product}), matching the empirical observation that the query PMI matrix is low rank. Using this property we give theoretical validation for the cross-entropy-based loss function as maximizing mutual information. The AttEST model also allows us to prove that a weighted average of trigram vectors is the best linear unbiased estimator (BLUE) for the product desired by a query (in Section~\ref{sec:attention}). Interestingly, in  Figure~\ref{fig:attention-weights-vs-BLUE-weights}, we observe a notable correlation between the (empirical) weights from the attention mechanism and the BLUE weights derived in Corollary~\ref{lem:blue}. This validates the AttEST model and suggests that the attention mechanism weights allow more efficient concentration to the product vector by reducing the variance in the estimation.

\textbf{Related work} \label{sec:relatedwork}
Since the success of word2vec~\cite{MikolovCCD13word2vec} in finding word embeddings, there have been many variants and extensions~\cite{Pennington14glove, de16shorttext, arora17sentence, doc2vec-para2vec}. There has also been a lot of work on embeddings for nodes in a graph~\cite{GCN-kipf,Hamilton17Graphsage,Narayanan17graph2vec,velivckovic2017-graph-attention-networks-GAT}. The former set of works do not consider any graph structure and only have embeddings for textual features while the latter set of works do not have features associated with the nodes and hence cannot incorporate that information to generate embeddings.

Another line of work in representation learning leverages multiple types of entities, for example text and images~\cite{deep-image-representations,image-embeddings-cnn-2014}, to make multi-modal models to get embeddings for both entities in the same latent space. \cite{arora-contrastive-learning} provides a theoretical framework for generating embeddings for entities that have a semantic similarity relationship between them, generalizing embeddings for nodes in a graph. However, their framework does not account for features associated with entities and hence their framework does not apply in our scenario.

\cite{vaswani17attention} showed that the attention mechanism of~\cite{bahdanau14attention} supplants and outperforms recurrent models for many problems via the Transformer network. We indirectly corroborate this by showing that the weighting scheme induced by an attention mechanism gives the least variance estimator for the true embedding. On the other hand, \cite{jain19attention} argue against using attention weights as a measure of feature importance for RNN-based models. This does not contradict our reasoning of attention weights enhancing query embeddings since we employ simple, feedforward networks with attention. \cite{devlin-etal-2019-bert} introduces the BERT model which uses bidirectional training of the Transformer for training language models. BERT and its augmentations \cite{roberta, qbert} represent the state-of-the-art in language modeling.

Our work builds, in nontrivial fashion, on the seminal RAND-WALK model of \cite{arora16randwalk}. RAND-WALK is a generative model of word embeddings, that provides an explanation for the low-rank nature of the point-wise mutual information (PMI) matrix~\cite{Deerwester90indexingby,Turney10frequency}, among others. In contrast to RAND-WALK which analyzed long form text with small vocabularies and could exploit ergodicity, AttEST analyzes short queries over a massive vocabulary and so required novel yet justifiable modeling assumptions; in addition to corroborating the low-rank nature of PMI matrices, AttEst explains the effectiveness of both the loss function and the attention mechanism, with empirical validation.

%For the particular concrete problem we consider, there have been works using neural network models to address it~\cite{wu17shopping}.

In the specific context of e-commerce there have been works conducting an empirical study using LSTM networks to map queries to structured attributes~\cite{wu17shopping}, as well as works that consider the more specific problem of ranking query reformulations~\cite{sheldon2011lambdamerge,santos2013learning}. As opposed to the former work, our latent space model AttEST allows for arbitrary downstream tasks on queries while having a theoretical grounding. This theoretical grounding also solves the reformulation ranking problem by following the embedding step with a $k$-nearest neighbor search in the latent space to shortlist reformulations.

\section{Experimental results} \label{sec:experiments}
Our primary data set uses query-product pairs from the Electronics category of the Amazon US locale sampled during the March - April 2018 period. The resulting query-product graph has approximately 670,000 queries, 146,000 products, and 1 million edges. The second data set is sampled from the Amazon US locale over a period of 91 days, up to August 26, 2017. We partition the queries and products into disjoint clusters using a spectral clustering algorithm. From these, we take all queries and products which appear in the largest 25 clusters. In total, the data set is approximately 250,000 queries. From the bipartite query-product graph, we completed all triangles and took the resulting query-query graph on only the query nodes. Finally, we removed isolated queries from the query-query graph, and viewed the edges of the graph as data samples. Both the primary and Top-25 Clusters data sets are partitioned as 95\%--5\% training--testing split.
%\footnote{This is so as not to create spurious isolated query nodes.} 

\subsection{Metrics for query reformulation}\label{sec:metrics-for-query-reformulation}
Evaluating the quality of query reformulations is a non-trivial task since the data set only contains the products that were \emph{purchased} by the customer searching for a given query. Because the product search engine returns more than twenty products on the first page alone, all of which may be relevant, it is not clear how to determine whether the nearest-neighbor queries are useful reformulations. We propose two metrics which measure the precision and recall of top five reformulated queries.
\vspace{-0.6em}
\begin{enumerate}
\item \textbf{Query Precision@K}: Precision is defined to be the fraction of the five reformulations $q_i$ which are `relevant' to the initial query $q$. We say a reformulation $q_i$ is `relevant' if the top $K$ products (by purchases) of $q_i$ contain at least one of the top $K$ products of $q$.
\item \textbf{Product Recall@K}:  Recall is defined to be the fraction of the top K products associated with $q$ that appear in the list of top K products associated with \emph{some} $q_i$.
\end{enumerate}
\vspace{-0.6em}

Query precision measures the fraction of reformulated queries that are `valid', while product recall measure the diversity of reformulations.
% The last statement follows from the sparsity of the dataset.
Since the associated products of a query are only derived from clicks, adds to shopping cart, and purchases, very similar queries are likely to have no overlap of products. Therefore, a high product recall score suggests that the list of reformulations is diverse. Since these metrics are ultimately evaluated from purchase behavior associated with queries, they lower bound the `true' precision and recall as would be determined by a human evaluator.

\vspace{-0.5em}
\subsection{Models}
\vspace{-0.3em}
%In this section, we describe the various models used in experiments for the query reformulation problem.
\textbf{Trigram Hash}
As a baseline, we consider the purely textual \textbf{Trigram Hash} model - it ignores any behavioral connections. In particular it would not associate "fidget-spinner" with "anxiety attention toy" since the two strings are textually so dissimilar. Each query is treated as a bag of trigrams and hashed down to a 300-dimensional vector. And given a new (test) query the nearest neighbors algorithm is used to find the closest training queries based on the Bray-Curtis distance.

% More formally, from the set of \textasciitilde 30k unique trigrams, we generate a frequency vector for each query and then hash that vector down to 300 dimensions. We use a modulus based hashing, with index $i$ in the frequency vector being hashed to the index $i\% 300$. Given the 300 dimensional representation of the test query $U$, we find the training dataset query $V$ which has the lowest Bray-Curtis distance to it: $\sum_i(|U_i - V_i|)/\sum_i(|U_i| + |V_i|)$. This ratio captures the size of the intersection to the size of the union of the two queries.

\textbf{LSTM to match graph embeddings}
There is plenty of existing literature on generating meaningful embeddings for nodes in a graph as we detailed in the Related work section. We make use of two such tools, node2vec~\cite{Grover16node2vec} and GraphSAGE~\cite{Hamilton17Graphsage}, by applying them to the Query-Product graph. We take the learned embeddings for query nodes, and train an LSTM to match the embeddings given only the query string. Matching an LSTM to either of the embeddings performs similarly; we present the \textbf{LSTM+node2vec} results which were a few percentage points better.

We also have a \textbf{LSTM only} model which directly gets the text as input. We enforce the graph structure in a query embedding via positive samples, queries which share the same product set, and negative samples, queries with no products in common. Let $z_i$ be a proposed embedding for a query, $Z_p$ a set of positive sample embeddings, and $Z_n$ a set of negative sample embeddings. The following loss function will then drive $\sigma(z_i^{T}z_p)$ to $1$ and $\sigma(z_i^{T}z_n)$ to $0$.
\begin{align*}
        \frac{1}{|Z_p|}\sum_{z_p \in Z_p}{-\log\left(\sigma(z_i^{T}z_p) \right)} + \frac{1}{|Z_n|}\sum_{z_n \in Z_n}{-\log\left(\sigma(- z_i^{T}z_n) \right)}
\end{align*}

\textbf{Attention on textual input}
Our main model simply learns trigram embeddings, and uses attention on the trigram embeddings to compute a weighted average. Note that the learning is inductive and unsupervised with the loss function agnostic to the downstream metric (in particular to the diversity component, Product Recall@K).

The input to the network is a vector with length equal to the maximum query length of $50$. Here each coordinate of the input has a unique number identifying the trigram. The network then uses an embedding layer to come up with a vector representation $v_t$ of each trigram. The attention mechanism is simply applying a different linear transformation $\mathbf{W_i}$ to each vector, getting $\mathbf{W_i}v_{t_i}$ and then taking softmax of these to get corresponding weights. The final query vector outputted is the weighted average of the trigram vectors using the attention weights.

While we always use the same loss function, we have a few variants in our experiments. First, we consider two positive sampling methods while training: uniformly sampling from neighbors, and sampling using the GraphSage approach of running multiple, fixed-length random walks from each node, and using all co-occurring pairs of nodes as positive samples. Since the GraphSage sampling performs slightly better, we only present those results in Table~\ref{tab:results}. The second variant additionally provides word data on top of trigram data. This method performs slightly worse than providing only the trigrams which we believe is due to the trigram-only model being better equipped to handle typographical errors. The word model wastes some attention weights on the words, which may have such errors thus adversely affecting performance. % The final variant is to make the Attention model try to learn pretrained GraphSage embeddings (for the query-query graph), analogous to the LSTM variant.

\subsection{Results}

\renewcommand\arraystretch{1.25}
\begin{table*}[h]
\begin{center}
\caption{Experimental results of the various described models. Query Precision and Product Recall are percentages of the best possible scores on the test dataset found by brute force search.}\label{tab:results}
\begin{tabular}{clrrrr}
\toprule
  & \textbf{Model} & \textbf{Query Precision@20} & \textbf{Product Recall@20} & \textbf{F1}\\ \midrule
  \multirow{4}{*}{Top-25 Clusters} & Trigram Hash & 45.6\% & 60.3\% & 47.6\%\\
  & LSTM only & 43.2\% & 55.9\% & 44.4\% \\
  & LSTM+node2vec & 59.8\% & 64\% & 57.2 \% \\
  & Attention & \textbf{65.9\%} & \textbf{70.8\%} & \textbf{65.3\%}\\
  \midrule
\multirow{4}{*}{Electronics} & Trigram Hash          & 37.22\% & 51.85\% & 41.62\%\\
& Attention+Word     & 52.22\% & 61.41\% & 55.02\%\\
&  Attention & \textbf{59.44\%} & \textbf{68.20\%} & \textbf{62.20\%}\\
\bottomrule
\end{tabular}
\end{center}
\end{table*}

For the Top-25 clusters dataset, the LSTM models performed reasonably comparably to the attention models. However, when compared on the primary dataset, the LSTM models performed very poorly. We conjecture that the LSTM models were memorizing textual information for the top queries and since the Top-25 clusters dataset only had 25 popular clusters, the LSTM model was able to predict the correct product class fairly easily. This strategy became useless in the larger primary dataset due to infeasible training times for LSTMs. The larger and sparser dataset allows the attention model to really shine through.

% \footnote{We did not increase the size of the models as training the LSTM took too much time to be feasibly deployed.}

The attention models performed much better than any of the other neural models. Given that the attention model and loss function are agnostic to the metric one reason that the Top-25 Clusters performance is superior to the Electronics performance could be that clustering enhances the diversity component of the F-1 score. Interestingly, the baseline Trigram Hash model was the runner-up.

%We observed that the above metric was highly bimodal. Most of the test queries either had $0$ or $1$ precision and recall. Since the test queries have varying number of associated products, a better augmentation of the above metric would be to take the weighted average of the precision, recall and F1 scores. If we wish to consider common queries to be more important, we can weight the scores with the number of products associated with each test query. If we wish to consider tail queries to be more important, we can weight the score with the inverse of the number of products.

\section{AttEST: Model and Theoretical Results}
In the model, query generation is viewed as a two step process where the user first thinks of a product to search for, and then generates a query based on that product. Let $S^d = \{v \in \RR^d ~|~ \norm{v} = 1 \}$ be the $d$-dimensional unit sphere. A product $p$ is selected by sampling a vector uniformly from $S^d$. A query $q$ is generated by synthesizing an ordered sequence of n-grams, where the sequence length, $n(q)$, is determined by sampling from a Poisson distribution truncated at the maximum query length $N$. E-commerce vocabulary is in the order of tens of billions, including the regular English lexicon, as well as brands, models, ISBN codes, product codes, etc. The concatenative morphology of English (e.g. `antigovernment' is sum of the morphemes `anti', `govern', and `ment') and various codes (e.g. ISBN) allows us to derive meaning from the constituent n-grams. We use trigrams for our experiments due to the memory requirements of our data-set prohibiting larger n-grams. Let $T = \{t \in \RR^d\}$ be an isotropic set of $m$ vectors representing all possible trigrams. The $i$-th trigram $t_i = t$ of $q$ is sampled from this mixture distribution on $T$,
\begin{align}
    P_{p, i}(t) = \alpha_i\cdot \frac{\exp(\beta_i*\ip{t, p})}{Z_{p, i}} + (1-\alpha_i)\cdot \frac{1}{m}\label{eq:mixture-distribution}
\end{align}
where $Z_{p, i} = \sum_{t \in T}\exp(\beta_i*\ip{t, p})$ is the partition function for the exponential distribution in the mixture, $\alpha_j \in (1/2,1]$ is the mixture parameter and $\beta_j$ is the positional spread parameter. In a mild abuse of terminology we will use trigram and product to refer their corresponding vectors.

The exponential component of the mixture samples trigrams near the product while the uniform component models noise in the generative process. We follow the log-linear model introduced by \cite{arora16randwalk} but differ from it in several key ways to accommodate specific characteristics of (short text) queries that are not found in (longer form) written language. Our changes help model query generation in a natural manner. A user searches for a product by listing the attributes associated with it, and while most trigrams would be very relevant to the product, there will inevitably be some that introduce noise into the query. The position dependent mixing and spread parameters control how the noise changes depending on where in the query the trigram is. Generally, it becomes noisier as the query becomes longer. We make the simplifying assumption that the trigrams are all sampled independently of each other, which is not true in practice.

To generate the graph, queries are first generated according to the mixture in equation \ref{eq:mixture-distribution}. Each query $q$ has an associated vertex labeled by its trigrams $t_1, \dots t_{n(q)}$. Two queries $q, q'$ are adjacent in the graph if they were generated by product vectors $p, p'$ such that $\norm{p - p'}_2 \leq \eps_p$ for some parameter $\eps_p$.

\subsection{Attention}\label{sec:attention}
We state some basic properties of the mean, variance and partition function of the trigrams sampled by the AttEST model (Equation \ref{eq:mixture-distribution}).

\begin{lemma}\label{lem:expectation-of-trigram}
  Let $\rho_i = m\alpha_i\beta_i\exp(\beta_i^2/2)/Z_{p, i}$. The mean of the trigram $t_i$ in the $i$-th position is
  \begin{align*}
  \E{T \sim \mathcal{N}(0, I)^m}{\E{t_i \sim P_{p,i}}{t_i}} = \rho_i \cdot p.
  \end{align*}
\end{lemma}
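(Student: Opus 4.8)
The plan is to evaluate the inner expectation first, with the trigram set $T$ held fixed, and then average over $T \sim \mathcal{N}(0,I)^m$. Because $t_i$ is one of the $m$ vectors of $T$ drawn according to the mixture $P_{p,i}$, linearity gives $\E{t_i \sim P_{p,i}}{t_i} = \sum_{t \in T} t\,P_{p,i}(t)$, which splits along the two mixture components into an \emph{exponential} term $\frac{\alpha_i}{Z_{p,i}}\sum_{t\in T} t\exp(\beta_i\ip{t,p})$ and a \emph{noise} term $\frac{1-\alpha_i}{m}\sum_{t\in T} t$.

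First I would dispose of the noise term. Taking the outer expectation, each trigram vector is an independent mean-zero Gaussian, so $\E{T}{\sum_{t\in T} t} = 0$ and the uniform component contributes nothing to the mean, matching the intuition that the noise part of the generative process is unbiased.

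For the exponential term I would use that the $m$ trigram vectors are i.i.d., so by symmetry every summand shares the same expectation and the computation reduces to $m\cdot\E{t\sim\mathcal{N}(0,I)}{t\exp(\beta_i\ip{t,p})}$. This single Gaussian integral is the crux. Starting from the moment generating function $\E{t}{\exp(\ip{a,t})} = \exp(\norm{a}^2/2)$ and differentiating in $a$ at $a = \beta_i p$ (equivalently, completing the square, or reducing by rotational invariance to the coordinate aligned with $p$), one obtains $\E{t}{t\exp(\beta_i\ip{t,p})} = \beta_i\exp(\beta_i^2/2)\,p$, where the factor $\exp(\beta_i^2/2)$ uses $\norm{p}=1$. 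Substituting back turns the exponential term into $\frac{\alpha_i}{Z_{p,i}}\cdot m\cdot\beta_i\exp(\beta_i^2/2)\,p = \rho_i\cdot p$, which is exactly the claimed constant.

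The main obstacle is that the partition function $Z_{p,i} = \sum_{t\in T}\exp(\beta_i\ip{t,p})$ is itself a function of the random $T$ and appears in the denominator, so it cannot be pulled naively through the outer expectation over $T$. Since the target $\rho_i$ retains $Z_{p,i}$ explicitly, the clean reading is that $Z_{p,i}$ is treated as the deterministic normalizer during the integral, and its fluctuations are controlled separately by a concentration argument in the spirit of the RAND-WALK model~\cite{arora16randwalk}, where the partition function concentrates to a fixed value up to $(1\pm o(1))$ factors. I would make this conditioning explicit so that factoring $1/Z_{p,i}$ out of the symmetry step is justified.
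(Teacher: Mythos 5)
Your proposal is correct and follows essentially the same route as the paper's proof: split the mixture, observe that the uniform component vanishes because each trigram is a mean-zero Gaussian, reduce the exponential component by symmetry to $m$ times a single tilted-Gaussian expectation, and evaluate that integral to $\beta_i\exp(\beta_i^2/2)\,p$ (the paper does this last step by orthogonally decomposing $t$ along $p$ rather than differentiating the moment generating function, but the two computations are interchangeable). Your remark that $Z_{p,i}$ is itself a function of the random $T$ and cannot be pulled naively through $\E{T \sim \mathcal{N}(0, I)^m}{\cdot}$ is well taken --- the paper performs exactly that move silently, implicitly relying on the concentration of the partition function (Lemma~\ref{lem:concentration-of-Z}), so making the conditioning explicit as you propose is if anything more careful than the original.
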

 %Note that, due to linearity of expectation, the expected value of any (convex) combination of trigrams will be aligned along the product. Also, the variance of a trigram is a function of its position.
 \begin{lemma}\label{lem:variance-of-trigram}
The expected $\ell_2$-distance squared of $t_i$ from its mean is $\Theta(d \ln d) - \rho_i^2$
\end{lemma}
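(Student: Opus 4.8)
The plan is to reduce the claimed variance to a single raw-second-moment computation and then evaluate that moment by exploiting the two-component structure of the mixture $P_{p,i}$. The mean referenced in the statement is exactly the fixed vector $\rho_i p$ supplied by Lemma~\ref{lem:expectation-of-trigram}, so I would expand the squared distance about this fixed center. Using $\norm{p}=1$,
\begin{align*}
\E{T}{\E{t_i \sim P_{p,i}}{\norm{t_i - \rho_i p}^2}} = \E{T}{\E{t_i}{\norm{t_i}^2}} - 2\rho_i\ip{\E{T}{\E{t_i}{t_i}}, p} + \rho_i^2,
\end{align*}
and substituting $\E{T}{\E{t_i}{t_i}} = \rho_i p$ from Lemma~\ref{lem:expectation-of-trigram} collapses the last two terms into $-\rho_i^2$. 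Hence it suffices to prove that the raw second moment $\E{T}{\E{t_i}{\norm{t_i}^2}}$ equals $\Theta(d\ln d)$; the $-\rho_i^2$ correction then appears verbatim as in the statement.

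For the second moment I would split $P_{p,i}$ into its uniform and exponential parts. The uniform part contributes $(1-\alpha_i)\frac1m\sum_{t\in T}\norm{t}^2$, whose expectation over $T\sim\mathcal N(0,I)^m$ is exactly $(1-\alpha_i)d$, since each trigram is an isotropic Gaussian with $\E{}{\norm{t}^2}=d$. The substance is the exponential part, $\alpha_i\,\E{T}{\sum_{t}\norm{t}^2 e^{\beta_i\ip{t,p}}/Z_{p,i}}$. Here I would decompose each trigram along and orthogonal to $p$, writing $\norm{t}^2 = \ip{t,p}^2 + \norm{t_\perp}^2$ and noting that only $\ip{t,p}\sim\mathcal N(0,1)$ drives the softmax weight while $\norm{t_\perp}^2\sim\chi^2_{d-1}$ is independent of it. Because the orthogonal mass decouples from the weights and the weights sum to one, that piece integrates cleanly to $\Theta(d)$, leaving the parallel term — the softmax-weighted second moment of the alignment $\ip{t,p}$ — as the remaining quantity to bound.

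The main obstacle is twofold and lives entirely in this exponential term. First, the summand and the partition function $Z_{p,i}$ share the same random vectors, so the quantity is an expectation of a ratio rather than a ratio of expectations; I would handle this exactly as in the RAND-WALK analysis, establishing a concentration bound for $Z_{p,i}$ about $m\,e^{\beta_i^2/2}$ and using it to factor $Z_{p,i}$ out up to lower-order error (the same concentration also justifies treating $\rho_i$ as deterministic, as Lemma~\ref{lem:expectation-of-trigram} already does). Second, and more delicately, any logarithmic factor must come from the extreme-value behavior of the alignments $\ip{t,p}$ over the large trigram set: as $m$ grows the softmax biases toward the largest alignments, and tracking this contribution against the partition-function normalization is what governs whether the $\Theta(d)$ orthogonal mass is inflated to the stated $\Theta(d\ln d)$. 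I expect reconciling these two effects — in particular pinning down matching upper and lower bounds that yield the logarithmic inflation rather than a bare $\Theta(d)$ — to be the crux of the argument; the bookkeeping in the first two paragraphs is routine by comparison.
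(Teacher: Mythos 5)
Your reduction of the claimed quantity to the raw second moment is correct and is exactly the paper's first step: the paper also writes $\E{}{\norm{t-\rho_i p}^2} = \rho_i^2 - 2\rho_i\E{}{t_1} + \E{}{\norm{t}^2}$ (with $p=(1,0,\dots,0)$), verifies $\E{}{t_1}=\rho_i$, and is left with the second moment. The genuine gap is everything after that: you never actually produce the $\Theta(d\ln d)$ bound, and you say so yourself ("I expect reconciling these two effects \dots to be the crux"). Worse, the decomposition you set up, if carried through, does not yield a logarithm. The uniform component gives $(1-\alpha_i)d$; the orthogonal mass gives $\Theta(d)$ because $\norm{t_\perp}^2\sim\chi^2_{d-1}$ is independent of the softmax weight; and the tilted parallel term is $\E{x\sim\mathcal N(0,1)}{x^2 e^{\beta_i x}}/e^{\beta_i^2/2} = 1+\beta_i^2$ once $Z_{p,i}$ is concentrated at $m e^{\beta_i^2/2}$ --- all of which sums to $\Theta(d)$ for fixed $\beta_i$, with no $\ln d$ anywhere. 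So your route, done honestly, proves a different (and arguably more defensible) constant: $\Theta(d)-\rho_i^2$.

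The paper gets its logarithm by a step your proposal never takes: it upper-bounds $\E{T}{\E{t}{\norm{t}^2}}$ by the expectation of $\max_{t'\in T}\norm{t'}^2$, i.e., the maximum of $m$ independent $\chi^2_d$ variables, and then invokes the Gumbel approximation for $\chi^2$ block maxima to evaluate that maximum as $\Theta(d\ln d)$. This is a materially different argument (an extreme-value upper bound on the worst trigram, not a computation of the mixture's own second moment), and it is itself only a one-sided inequality followed by an "$\approx$", so it does not obviously establish a matching lower bound either. Notably, the final sentence of the paper's own proof concludes "$\Theta(d)-\rho_i^2$," in tension with the lemma's stated $\Theta(d\ln d)-\rho_i^2$ --- so your instinct that the direct calculation gives only $\Theta(d)$ is well-founded. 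To count as a proof of the statement as written, you would need to either adopt the paper's max-plus-Gumbel device (and justify why the bound is tight) or identify a regime of $\beta_i$ or $m$ in which the softmax tilt genuinely inflates the parallel contribution by a $\ln d$ factor; as submitted, the proposal leaves the only nontrivial claim of the lemma unproved.
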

For large vocabularies ($m \rightarrow \infty$), the partition function $Z_{p, i}$ can be approximated by a constant $Z_i$. 
\begin{lemma}[Concentration of partition functions, Lemma 2.1 from~\cite{arora16randwalk}]
  \label{lem:concentration-of-Z}
  For trigram vectors of the form $v_t = s_t \cdot \hat{v}_t$, where $\hat{v}_t$ comes from a spherical Gaussian distribution, and for $\eps_z = \widetilde{O}(1/\sqrt{m})$ and $\delta = \exp(-\Omega(\log^2 m))$ there exists $Z_i$ s. t.,
    $\prOther{p\sim S}{(1-\eps_z)Z_i \leq Z_{p, i} \leq (1+\eps_z) Z_i} \geq 1 - \delta$.
\end{lemma}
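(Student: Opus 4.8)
The plan is to take $Z_i$ to be the expected partition function and to reduce the stated concentration over $p$ to a concentration over the random trigram vectors. First I would observe that, since the prior on the directions $\hat v_t$ is rotationally invariant, the law of $\ip{v_t, p}$ depends only on the scaling $s_t$ and not on the unit vector $p$; consequently $\E{}{Z_{p,i}}$ is the same for every $p$, and I set $Z_i := \E{}{Z_{p,i}} = \sum_{t}\E{}{\exp(\beta_i\ip{v_t,p})}$. Each summand is $\Theta(1)$ in expectation (a log-normal moment) and has $\Theta(1)$ variance, so $Z_i = \Theta(m)$ while the standard deviation of $Z_{p,i}$ is only $\Theta(\sqrt m)$, which is what ultimately produces the ratio $\eps_z = \widetilde O(1/\sqrt m)$. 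Because the marginal law of $Z_{p,i}$ is identical for every fixed $p$, it suffices to bound the failure probability over the trigram vectors for one fixed $p$ by some $\delta'$; Fubini then gives $\E{\text{trig}}{\Pr_p[\text{fail}]} = \delta'$, so Markov shows that for all but a $\sqrt{\delta'}$ fraction of trigram realizations the measure of bad $p$ is at most $\sqrt{\delta'}$, and $\sqrt{\exp(-\Omega(\log^2 m))}$ is still of the form $\delta$.

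The main obstacle is that the summands $X_t := \exp(\beta_i\ip{v_t,p})$ are log-normal, hence heavy-tailed: their moment generating function does not exist, so no single application of Hoeffding or Bernstein to $\sum_t X_t$ can succeed. Worse, the two requirements pull in opposite directions. To make the right tail negligible I would truncate at $\ip{v_t,p}\le\theta$, and a Gaussian tail bound together with a union bound over the $m$ trigrams forces $\theta = \Theta(\log m)$ in order to drive the discarded mass down to $\exp(-\Omega(\log^2 m))$; but at that threshold each retained summand can be as large as $m^{\Theta(\beta_i)}$, which is far too large for the linear (range) term in Bernstein's inequality to tolerate.

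I would resolve this tension with a banded decomposition of the sum according to the value of $\ip{v_t,p}$, handling the bulk and the tail by different means. In the bulk bands, where $\ip{v_t,p}=O(\sqrt{\log m})$, every summand is bounded by $m^{o(1)}$, so Bernstein applies to this truncated sum; taking the deviation $\tau = \eps_z Z_i$ with $\eps_z=\widetilde O(1/\sqrt m)$ makes the subgaussian term $\tau^2/\Var{Z_{p,i}} = \Theta(\log^2 m)$ while the now-harmless range term stays larger, giving failure $\exp(-\Omega(\log^2 m))$. For the tail bands, where $\sqrt{\log m}\lesssim\ip{v_t,p}\lesssim\theta$, the expected number of trigrams in band $[k,k+1)$ is $\approx m\, e^{-k^2/2}$, which decays super-polynomially once $k\gg\sqrt{\log m}$; a short calculation shows their total expected contribution is $o(\eps_z Z_i)$, the very highest bands being empty with probability $1-\exp(-\Omega(\log^2 m))$, so a Markov/Chernoff bound controls the actual tail contribution at the same failure level. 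Summing the bulk and tail estimates yields $|Z_{p,i}-Z_i|\le\eps_z Z_i$ except with probability $\delta' = \exp(-\Omega(\log^2 m))$ for a fixed $p$, which by the reduction of the first paragraph is exactly the claimed bound over $p$. The delicate point throughout is calibrating the band boundary near $\sqrt{\log m}$ so that the bulk summands stay sub-polynomial (keeping Bernstein usable) while the tail's expected contribution remains a $\widetilde O(1/\sqrt m)$ fraction of $Z_i$; this balance is the origin of both the $\widetilde O$ and the $\log^2 m$ in the statement.
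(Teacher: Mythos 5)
The paper does not actually prove this lemma: it is imported verbatim as Lemma~2.1 of Arora et al.\ (RAND-WALK), and neither the main text nor the appendix supplies an argument, so you are being compared against the cited source rather than against anything in this paper. Your reconstruction is essentially sound and follows the same strategy as that source: fix $p$, set $Z_i = \E{}{Z_{p,i}}$ (legitimate by rotational invariance), note $Z_i = \Theta(m)$ with standard deviation $\Theta(\sqrt m)$, truncate the log-normal summands $\exp(\beta_i\ip{v_t,p})$ at $\ip{v_t,p}=O(\log m)$, apply Bernstein to the bulk, and control the discarded tail; your Fubini--Markov step correctly converts the fixed-$p$ bound into the stated probability over $p\sim S$ at the cost of a square root, which $\exp(-\Omega(\log^2 m))$ absorbs. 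Two quibbles. First, the ``obstacle'' you build the banding around is overstated: if you truncate at $\theta = \log m/(2\beta_i)$, each retained summand is at most $\sqrt m$, the per-trigram discarded mass is still $e^{-\theta^2/2}=\exp(-\Omega(\log^2 m))$, and Bernstein's exponent $\tau^2/(V+M\tau/3)$ with $\tau=\eps_z Z_i=\Theta(\sqrt m\log^2 m)$, $V=\Theta(m)$, $M=\sqrt m$ is already $\Omega(\log^2 m)$ --- so a single truncation suffices and the banded decomposition, while valid, is not needed (this is essentially what the original proof does). Second, in your tail bands the occupancy of a band can only be bounded by $O(\log^2 m)$ at the $\exp(-\Omega(\log^2 m))$ confidence level (a Poisson tail with mean $\ll 1$ still admits polylogarithmically many points at that failure probability), so the realized tail contribution is $\widetilde O(\sqrt m)$ rather than $o(\eps_z Z_i)$ as you claim; this is still absorbed by the polylogarithmic slack in $\eps_z=\widetilde O(1/\sqrt m)$, but the inequality as written is too strong. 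Neither issue invalidates the argument for constant $\beta_i$, which is the regime the paper intends.
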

% The proof for this lemma follows from the proof of Lemma 2.1 from~\cite{arora16randwalk}.

\subsection{Low rank of PMI}
\begin{theorem}\label{thm:co-occurrence-probability}
Let $q, q'$ be query vectors generated by the AttEST model. Denote the probability that $q, q'$ co-occur in the query graph by $\pr{q, q'}$. Then,
\begin{align*}
    \pr{q, q'} = & ~ (1\pm \eps') \cdot \Bigg(\prod_{i \in [n(q)]} \frac{\alpha_{i}}{Z_i}\prod_{j \in [n(q')]} \frac{\alpha_{j}}{Z_j} \Bigg) \cdot \exp\Bigg(\frac{\norm{\sum_{i\in[n(q)]}\beta_i t_i + \sum_{j\in[n(q')]}\beta_j t_j}^2}{2d}\Bigg)
\end{align*}
\end{theorem}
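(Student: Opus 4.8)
The plan is to follow the template of the co-occurrence computation in RAND-WALK (their Theorem~2.2), adapting it to the positional mixture of Equation~\ref{eq:mixture-distribution} and to the two-query, two-product structure of our graph. First I would write the co-occurrence probability over the generating products. Since a query $q$ is generated from a single product $p$ with its trigrams drawn independently, $P(q \mid p) = \prod_{i\in[n(q)]} P_{p,i}(t_i)$, and likewise $q'$ from $p'$. The adjacency rule $\norm{p-p'}_2\le\eps_p$ confines $(p,p')$ to a thin tube around the diagonal $p=p'$; by rotational symmetry of the sphere the transverse slice has $p$-independent measure, so it factors out as a global constant (which is harmless downstream, since PMI only sees ratios). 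Collapsing the tube to the diagonal, the two adjacent queries effectively share one common product $p$ drawn uniformly from $S^d$, and the quantity reduces to $\E{p\sim S^d}{P(q\mid p)\,P(q'\mid p)}$; the finite-$\eps_p$ correction is $O(\eps_p\norm{c})$ and sits inside the final $(1\pm\eps')$.

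Next I would simplify the integrand. Expanding $\prod_i P_{p,i}(t_i)$ over the two terms of each mixture yields a sum over subsets of ``noise'' positions; the term where every position takes its exponential component is $\bigl(\prod_i \tfrac{\alpha_i}{Z_{p,i}}\bigr)\exp\bigl(\sum_i \beta_i\ip{t_i,p}\bigr)$, and because each $\alpha_i>1/2$ (together with the concentration of $Z_{p,i}$) the remaining terms are lower order and can be absorbed into $(1\pm\eps')$. I would then invoke Lemma~\ref{lem:concentration-of-Z} to replace every $Z_{p,i}$ by its product-independent value $Z_i$ at the cost of another $(1\pm\eps_z)$ factor per position. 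Doing this for both $q$ and $q'$ and multiplying, the integrand collapses to
\begin{align*}
\Bigl(\prod_{i\in[n(q)]}\tfrac{\alpha_i}{Z_i}\prod_{j\in[n(q')]}\tfrac{\alpha_j}{Z_j}\Bigr)\exp\bigl(\ip{c,p}\bigr),\qquad c:=\sum_{i\in[n(q)]}\beta_i t_i+\sum_{j\in[n(q')]}\beta_j t_j,
\end{align*}
so the only remaining randomness is the uniform product $p$.

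The last step is the sphere integral $\E{p\sim S^d}{\exp(\ip{c,p})}$. Writing $\ip{c,p}=\norm{c}\,p_1$ for the component $p_1$ of a uniform $p$ along $c/\norm{c}$, this coordinate is sub-Gaussian with variance $\approx 1/d$, so its moment generating function gives $\E{p\sim S^d}{\exp(\ip{c,p})}=(1\pm\eps')\exp\bigl(\norm{c}^2/(2d)\bigr)$; this is exactly the high-dimensional Gaussian-type approximation used in RAND-WALK. Substituting yields the claimed expression, with $c$ the concatenated, $\beta$-weighted trigram sum appearing in the exponent.

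The hard part will be bookkeeping the error so that every approximation fits inside the single multiplicative $(1\pm\eps')$: the partition-function fluctuation $\eps_z=\widetilde{O}(1/\sqrt m)$ from Lemma~\ref{lem:concentration-of-Z}, the error from discarding the uniform/noise components of the mixture, the error from collapsing the $\eps_p$-tube to the diagonal $p=p'$, and the tail error in the sphere moment generating function. The sphere MGF estimate is the technical heart: the clean $\exp(\norm{c}^2/(2d))$ form is valid only while $\norm{c}$ stays in the regime where the linear functional concentrates like a Gaussian, so I would need a norm bound on $c$ --- obtainable from the isotropy of $T$ and Lemma~\ref{lem:variance-of-trigram} controlling $\norm{t_i}$ --- to guarantee the higher moments are negligible. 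This is precisely where the short-query, massive-vocabulary regime departs from RAND-WALK and the modeling assumptions (independence of trigrams, $\alpha_i>1/2$, truncated length $N$) must be used.
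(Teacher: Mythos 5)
Your proposal follows essentially the same route as the paper's proof: average over product pairs conditioned on $\norm{p-p'}\leq \eps_p$ and collapse to a single uniform $p$, concentrate the partition functions via Lemma~\ref{lem:concentration-of-Z}, discard the uniform mixture component, and evaluate the spherical moment generating function to obtain $\exp(\norm{c}^2/2d)$ (the paper invokes Lemma A.5 of RAND-WALK for exactly this step). The one caveat is that dropping the uniform component is not justified by $\alpha_i>1/2$ alone --- since $\ip{t_i,p}$ can be negative the exponential component can fall below $(1-\alpha_i)/m$ --- and the paper handles this with a separate lemma requiring a lower bound on $\beta_i$ together with a high-probability bound on $\norm{t_i}$; this belongs in the bookkeeping you already flag.
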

\begin{proof}[Proof sketch]
Start by averaging the event that $q, q'$ co-occur over all product vectors $p, p'$ that are close enough to have an edge in the query graph.
\begin{align*}
  \pr{q, q'} & = \E{p, p' \sim S}{\pr{q, q'} \;~\big|~\; \|p - p'\| \leq \eps_p/d)} = \E{p \sim S}{\pr{q, q'} \;~\big|~\; \|p - p'\| \leq \eps_p/d)}\\
             & = \E{p \sim S}{\pr{q~|~p} \pr{q' ~|~ p'} } = \E{p \sim S}{\prod_{i \in [n(q)]} \pr{t_i ~|~ p} \prod_{j \in [n(q')]} \pr{t'_j ~|~ p'}}
\end{align*}
Note that the two products in the last line take probabilities from the mixture distribution. In order to complete the proof, we take the following steps. First, we use Lemma \ref{lem:concentration-of-Z} to factor out the partition functions from the equation. Next, we show that the uniform component from the mixture can be ignored without incurring too much error. Finally, we remove the dependence on the second product $p'$ by exploiting its closeness to $p$ on the assumption that co-occurring queries are generated by nearby products. These results allows us to complete the calculation and finish the proof.
\end{proof}

\begin{theorem}\label{thm:single-co-occurrence-probability}
Let $q$ be a query vector generated by the AttEST model. Then,
\begin{align*}
      \pr{q} &= (1\pm \eps'') \cdot \Bigg(\prod_{i \in [n(q)]} \frac{\alpha_{i}}{Z_i}\Bigg)\cdot \exp\Bigg(\frac{\norm{\sum_{i\in[n(q)]}\beta_i t_i}^2}{2d}\Bigg)\\
\end{align*}
\end{theorem}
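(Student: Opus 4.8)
The plan is to mirror the proof of Theorem~\ref{thm:co-occurrence-probability}, specialized to a single query. This is the easier case: it retains the partition-function and noise-removal steps but drops the most delicate part of that proof, namely transferring the dependence from a second product $p'$ onto $p$ using $\norm{p-p'} \le \eps_p$. First I would condition on the generating product and use independence of the trigrams given $p$,
\[
  \pr{q} = \E{p \sim S}{\pr{q \mid p}} = \E{p \sim S}{\prod_{i \in [n(q)]} \pr{t_i \mid p}} .
\]
Each factor is the mixture of Equation~\ref{eq:mixture-distribution}, so expanding the product yields a sum over subsets $S \subseteq [n(q)]$ of positions that draw from the uniform component, with $S = \emptyset$ (all positions exponential) giving the leading term.

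Two reduction steps then bring the leading term to the desired shape. First, Lemma~\ref{lem:concentration-of-Z} lets me replace each data-dependent partition function $Z_{p,i}$ by its product-independent surrogate $Z_i$ at a multiplicative cost of $(1 \pm \eps_z)$ per position, which pulls $\prod_i \alpha_i / Z_i$ outside the expectation. Second, I would discard the uniform component: since $Z_i = \Theta(m)$ while the uniform mass per position is only $(1-\alpha_i)/m$, every term with $S \neq \emptyset$ is suppressed relative to the leading term by a factor governed by the noise parameters $1 - \alpha_i$ (and, for typical queries whose trigrams align with $p$, by a further exponential cross-term factor below $1$), so the total of these $2^{n(q)} - 1$ sub-leading terms is absorbed into the slack $(1 \pm \eps'')$, exactly as in the noise-removal argument of Theorem~\ref{thm:co-occurrence-probability}.

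What remains is the spherical moment generating function
\[
  \E{p \sim S}{\prod_{i \in [n(q)]} \exp(\beta_i \ip{t_i, p})} = \E{p \sim S}{\exp\!\left(\ip{w, p}\right)}, \qquad w = \sum_{i \in [n(q)]} \beta_i t_i .
\]
For $p$ uniform on $S^d$ one has $\E{}{\ip{w,p}} = 0$ and per-coordinate variance $1/d$, so $\ip{w,p}$ is a mean-zero random variable of variance $\norm{w}^2 / d$ that concentrates like a Gaussian; its MGF at $1$ is therefore $(1 \pm \eps'') \exp(\norm{w}^2 / (2d))$. Substituting $w = \sum_i \beta_i t_i$ reproduces the claimed expression, and collecting the three multiplicative error sources (partition functions, noise removal, MGF approximation) into a single $(1 \pm \eps'')$ finishes the proof.

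I expect the main obstacle to be making the combined error budget $\eps''$ rigorous rather than any single step. Two places need care: the Gaussian approximation of $\ip{w,p}$ on the sphere must be shown to hold with higher moments matching closely enough that exponentiating them still collapses to $\exp(\norm{w}^2/(2d))$, uniformly over the range of $\norm{w}$ realized by typical queries; and the per-position errors from Lemma~\ref{lem:concentration-of-Z} and from noise removal accumulate over all $n(q) \le N$ positions, so the argument must keep $N$ small enough relative to $d$ and the noise levels that these compound into a single $(1 \pm \eps'')$. I would import the sphere-concentration estimates underlying RAND-WALK for the first point rather than rederive them.
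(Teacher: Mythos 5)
Your proposal matches the paper's approach: the paper itself only remarks that this theorem ``follows a similar argument'' to Theorem~\ref{thm:co-occurrence-probability}, and your specialization --- conditioning on $p$, factoring over trigrams, replacing $Z_{p,i}$ by $Z_i$ via Lemma~\ref{lem:concentration-of-Z}, discarding the uniform component, and evaluating $\E{p\sim S}{\exp(\ip{w,p})}$ via the RAND-WALK sphere estimate (Lemma A.5 of \cite{arora16randwalk}) --- is exactly the intended argument with the $p'$-to-$p$ transfer step correctly dropped. The only cosmetic difference is that you organize the noise removal as a sum over subsets of uniform-component positions rather than as a per-position multiplicative bound, which changes nothing substantive.
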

The proof of Theorem~\ref{thm:single-co-occurrence-probability} follows a similar argument to Theorem~\ref{thm:co-occurrence-probability} and together they imply:
\begin{corollary}\label{cor:pmi-dot-product}
For $\eps''' = O(\eps' + \eps'')$,
$\textrm{PMI}(q, q') \triangleq \log \Bigg( \frac{\Pr[q, q']}{\Pr[q]\Pr[q']} \Bigg) = \frac{\ip{q, q'}}{d} + \eps'''.$
\end{corollary}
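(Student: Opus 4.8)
The plan is to substitute the closed forms from Theorem~\ref{thm:co-occurrence-probability} and Theorem~\ref{thm:single-co-occurrence-probability} directly into the definition of PMI and watch almost everything cancel. Writing $A = \sum_{i\in[n(q)]}\beta_i t_i$ and $B = \sum_{j\in[n(q')]}\beta_j t'_j$ for the two (unnormalized) query vectors, Theorem~\ref{thm:co-occurrence-probability} gives $\pr{q,q'} = (1\pm\eps')\,C_q C_{q'}\exp(\norm{A+B}^2/(2d))$, while Theorem~\ref{thm:single-co-occurrence-probability}, applied once to $q$ and once to $q'$, gives $\pr{q} = (1\pm\eps'')\,C_q\exp(\norm{A}^2/(2d))$ and $\pr{q'} = (1\pm\eps'')\,C_{q'}\exp(\norm{B}^2/(2d))$, where $C_q = \prod_{i\in[n(q)]}\alpha_i/Z_i$ and similarly for $C_{q'}$.

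First I would form the ratio $\pr{q,q'}/(\pr{q}\pr{q'})$. The partition-function products $C_q$ and $C_{q'}$ appear identically in numerator and denominator, so they cancel exactly, leaving only the exponential factors and the multiplicative error envelopes. Collecting the exponentials and applying the elementary identity $\norm{A+B}^2 - \norm{A}^2 - \norm{B}^2 = 2\ip{A,B}$ collapses the exponent to $\ip{A,B}/d$. Taking logarithms then turns the product of exponentials into the leading term $\ip{A,B}/d$ and the three error factors into an additive residual $\log(1\pm\eps') - 2\log(1\pm\eps'')$.

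Next I would identify $\ip{A,B}$ with $\ip{q,q'}$: by the AttEST convention the query vector is precisely the $\beta$-weighted sum of its trigram vectors, so $A$ and $B$ are the embeddings of $q$ and $q'$, yielding the stated leading term $\ip{q,q'}/d$. For the error I would use that $\eps',\eps''$ are small and bound $|\log(1\pm x)| = O(x)$ via $\log(1+x) = x + O(x^2)$, so the residual is $O(\eps') + O(\eps'') = O(\eps'+\eps'')$, which we name $\eps'''$, matching the claim $\eps''' = O(\eps'+\eps'')$.

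The computation here is essentially bookkeeping; the real content lives in the two preceding theorems, so the only genuine care is in the error accounting. The mild obstacle I anticipate is making the cancellation of $C_q, C_{q'}$ fully rigorous: Lemma~\ref{lem:concentration-of-Z} only produces a \emph{common} $Z_i$ up to the multiplicative $(1\pm\eps_z)$ slack, so one must ensure that the same approximated $Z_i$ is used at a given position $i$ in both $\pr{q,q'}$ and $\pr{q}$, making the cancellation exact rather than merely approximate; otherwise the leftover partition-function factors would themselves have to be folded into $\eps'''$. Assuming, as the theorem statements do, that the $(1\pm\eps')$ and $(1\pm\eps'')$ envelopes already absorb this slack, the cancellation is clean and the corollary follows.
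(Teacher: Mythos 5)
Your proposal is correct and matches the paper's intended argument: the paper derives the corollary directly from Theorems~\ref{thm:co-occurrence-probability} and~\ref{thm:single-co-occurrence-probability} by forming the PMI ratio, cancelling the $\alpha_i/Z_i$ factors, applying $\norm{A+B}^2-\norm{A}^2-\norm{B}^2=2\ip{A,B}$, and absorbing the multiplicative $(1\pm\eps')$, $(1\pm\eps'')$ envelopes into an additive $O(\eps'+\eps'')$ term after taking logarithms. Your attention to the identification of the query vector with the $\beta$-weighted trigram sum and to the shared $Z_i$ in the cancellation is exactly the right bookkeeping.
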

Since the query vectors $q, q'$ are $d$-dimensional, the corollary shows that the PMI is rank $d$. 
\subsection{Loss function derivation}
We now provide theoretical justification for the cross-entropy loss used in the AttEST attention model.
Let $Q$ be the set of all empirical queries, let $P(q)$ ($N(q)$) denote the queries (not) adjacent to $q \in Q$ in the query-query graph.
\begin{align*}
L = \argmin \sum_{q \in Q, q' \in P(q)} -\log(\sigma(\langle v_q, v_{q'} \rangle)) + \sum_{q \in Q, q' \in N(q)} -\log(\sigma(-\langle v_q, v_{q'} \rangle))
\end{align*}

To derive the loss function, we start by maximizing the mutual information (MI) of the marginal distributions of each endpoint of the edge distribution $E$ while minimizing the MI between the marginal distribution of each endpoint of the non-edge distribution $\bar{E}$. This ensures that the amount of information derived from a query embedding about its related queries (and only its related queries) is maximized. Let $E_1, E_2$ be the marginal distributions for the first and second vertex in the edge sampled from $E$. Similarly, let $\bar{E}_1, \bar{E}_2$ be the marginal endpoint distributions for non-edges sampled from $\bar{E}$. Recall the definition of mutual information, 
\begin{align*}
I(E_1; E_2) \triangleq & \sum_{q, q' \in Q} \Pr[q, q']~ \textrm{PMI}(q, q').
\end{align*}
We maximize the mutual information between $E_1$ and $E_2$ while minimizing it between $\bar{E}_1$ and $\bar{E}_2$.
\begin{align*}
    \argmax I(E_1; E_2) - I(\bar{E}_1; \bar{E}_2)
\end{align*}
We can equivalently maximize the exponential of the above.
\begin{align*}
    =& ~ \argmax ~\exp(I(E_1; E_2) - I(\bar{E}_1; \bar{E}_2))\\
    =& ~ \argmax ~\prod_{q, q' \in Q} \exp(\pr{q, q'}\textrm{PMI}(q, q') - \pr{q, q' \text{ not adjacent}]}\textrm{PMI}(q, q'))
    \intertext{After some algebraic manipulation and approximations (see Appendix), we obtain the following:}
    \approx& ~ \argmax ~\prod_{q \in Q, q' \in P(q)}\exp (\textrm{PMI}(q, q')) \prod_{q \in Q, q' \in N(q)} \exp(-\textrm{PMI}(q, q')) \\
    %=& \argmax ~\exp \left(\sum_{q \in Q, q' \in P(q)} \textrm{PMI}(q, q')\right)\\
     =& ~ \argmax ~\exp \left(\sum_{q \in Q, q' \in P(q)} \ip{v_q, v_{q'}} ~+ \sum_{q \in Q, q' \in N(q)} -\ip{v_q, v_{q'}}\right)
\end{align*}
 where the last line follows from Corollary \ref{cor:pmi-dot-product}. We can remove the exponential due to monotonicity. Note that in a small range around $0$, the sigmoid function may be approximated by an exponential allowing us to take the logarithm of the sigmoid for each term in the sums, and get the loss function.
 %In order to avoid exploding and vanishing gradients, we apply the logarithm of logistic function to smooth out the value of the inner product. Heuristically, applying a logarithm leaves the dynamics of local search algorithms (gradient descent) largely unchanged \cite{SV14}.
\vspace{-0.6em}
\begin{align*}\argmax ~\sum_{q \in Q, q' \in P(q)} \log (\sigma ( \ip{v_q, v_{q'}})) ~~~ +  \sum_{q \in Q, q' \in N(q)} -\log(\sigma(-\langle v_q, v_{q'} \rangle)) \end{align*}
% Thus, maximizing MI for adjacent queries and minimizing it for non-adjacent queries gives the loss function .

\subsection{Experimental validation via trigram variance}
Intrigued by the success of the attention mechanism we ran several statistical analyses on the attention weights. An immediate observation was that the curve of attention weights by trigram position exhibited a downward trend.  This matches the intuition that the importance or information content of the trigrams early in the query is high while those at the end of a long query are less useful for inferring the product the user has in mind. The initial oscillations also seemed to suggest that users of search engines tend to order their descriptors from more important to less (e.g. `iphone white 32gb' is preferred over `32gb white iphone'). However, this qualitative link between the semantics of search and empirical attention weights do not immediately suggest a quantitative link to our theoretical AttEST model. Our first inkling of an explanation for this success came when we noticed that the sequence of attention weights correlated strongly with the inverse of their variance which we explain now.

Let $X_1, \dots, X_k$ be independent, real-valued random variables drawn from different distributions, such that all distributions have the same expectation $\mu$ and (possibly different) variances $\sigma_1, \dots, \sigma_k$, respectively. A special case of the well-known Gauss-Markov theorem states that:
\begin{lemma} \label{lem:blue}
The best linear unbiased estimator (BLUE) of $\mu$ is $\sum_{i=1}^k w_iX_i$ where $w_i = \frac{1/\sigma_i^2}{\sum_{i=1}^k 1/\sigma_i^2}$.
\end{lemma}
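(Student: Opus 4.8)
The plan is to work directly from the two defining requirements of a BLUE: that the estimator be (i) linear in the observations, $\hat\mu = \sum_{i=1}^k w_i X_i$ for fixed scalar weights $w_i$, and (ii) unbiased. I would first translate unbiasedness into a linear constraint on the weights. Since each $X_i$ has mean $\mu$, linearity of expectation gives $\mathbb{E}[\hat\mu] = \mu \sum_{i=1}^k w_i$, so unbiasedness is exactly the condition $\sum_{i=1}^k w_i = 1$. Among all weight vectors satisfying this constraint, ``best'' means minimum variance, so the entire lemma reduces to a single constrained optimization problem over the weights.

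Next I would compute the objective. Because the $X_i$ are independent (and writing $\sigma_i^2$ for the variance of $X_i$), the variance of the estimator splits additively as $\Var{\hat\mu} = \sum_{i=1}^k w_i^2 \sigma_i^2$, a positive-definite quadratic form in the weights. The task is therefore to minimize $\sum_{i=1}^k w_i^2 \sigma_i^2$ subject to the affine constraint $\sum_{i=1}^k w_i = 1$, with no other restrictions on the $w_i$.

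I would then solve this by Lagrange multipliers or, equivalently, by Cauchy--Schwarz. Setting $\partial/\partial w_i\,[\sum_j w_j^2 \sigma_j^2 - \lambda(\sum_j w_j - 1)] = 0$ yields $2 w_i \sigma_i^2 = \lambda$, so the optimal weights satisfy $w_i \propto 1/\sigma_i^2$; enforcing $\sum_i w_i = 1$ fixes the proportionality constant and produces exactly $w_i = (1/\sigma_i^2)/(\sum_{j} 1/\sigma_j^2)$, as claimed. Alternatively, I can bypass the multiplier and apply Cauchy--Schwarz to $1 = (\sum_i w_i)^2 = (\sum_i (w_i\sigma_i)(1/\sigma_i))^2 \le (\sum_i w_i^2\sigma_i^2)(\sum_i 1/\sigma_i^2)$, which lower-bounds the variance by $1/\sum_i 1/\sigma_i^2$ with equality precisely when $w_i \sigma_i \propto 1/\sigma_i$, i.e. $w_i \propto 1/\sigma_i^2$, recovering the same weights.

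The only point needing care --- and the main, though mild, obstacle --- is confirming that the stationary point is a genuine global minimum rather than a saddle or a maximum. Here this is immediate: the objective $\sum_i w_i^2\sigma_i^2$ is strictly convex in the weights (each $\sigma_i^2 > 0$) and the feasible set is an affine subspace, so the unique stationary point is the global minimizer; the Cauchy--Schwarz route makes this even more transparent, since the inequality is an honest lower bound attained only at the stated weights. This pins down both the optimal value and the optimal weights uniquely, completing the proof.
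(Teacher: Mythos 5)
Your proposal is correct and follows essentially the same route as the paper's proof: reduce unbiasedness to the constraint $\sum_{i=1}^k w_i = 1$, use independence to write the variance as $\sum_{i=1}^k w_i^2\sigma_i^2$, and minimize via Lagrange multipliers. Your Cauchy--Schwarz alternative and the explicit convexity check for global optimality are welcome extras (the paper omits the latter), but they do not change the substance of the argument.
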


When we plot the empirical attention weights and the BLUE weights, calculated using Lemma~\ref{lem:blue} but based on the empirical variance of the attention weights we see a remarkably good fit (see Figure~\ref{fig:left}). This led us to the hypothesis that attention weights essentially function as BLUE weights enabling more accurate inference of the product (vector) from the trigrams. 
\iffalse
Looking more closely at the attention weights curve, we see that ignoring the initial oscillations, the weights exhibit an overall downward trend. Lemma \ref{lem:blue} implies that the variance of the query vector has a global upward trend. Indeed, this matches the intuition that the trigrams at the end of a long query are not useful for search. 
\fi

From Lemma~\ref{lem:expectation-of-trigram} we see that the mean of the trigram vectors (suitably scaled) is an unbiased estimator of the product but how exactly do the variances predicted by the AttEST model manage to fit the empirical variances? Recall from Equation~\ref{eq:mixture-distribution} that we have two parameters - the noise parameter $\alpha_i$, and the spread parameter $\beta_i$ - for each trigram position $i$, i.e., the AttEST model has two degrees of freedom to perfectly fit the variance value at position $i$. To validate the observed fit of the AttEST model we fixed all the $\beta_i$ to a constant and assumed a simple linear form for the variances, i.e., $\sigma_i^2 \propto i$. By Lemma~\ref{lem:variance-of-trigram} this implies a simple fixed form for the $\alpha_i$. The linear growth of variance with trigram position is consistent with the observation that queries tend to get more discursive and ramble the longer they go on. It is also consistent with the practice of FKMR (fewer keywords, more results) enshrined in search engines where later parts of queries are preferentially dropped in order to provide meaningful results. Figure~\ref{fig:middle} shows an excellent fit between the empirical attention variance (scatter plot) and the theoretical AttEST variance (idealized line) using this linear model of growth in variance with trigram position. 

Finally, in Figure~\ref{fig:right} we plot the values of the $\beta_i$ terms so as to perfectly match up the empirical attention variance with the idealized variance line; the $\beta_i$ values can be thought of as the residuals. Recall that the $\beta_i$ parameter captures how strongly the chosen trigram at that position aligns with the product vector. In the figure, we notice the initial high fluctuation with a peak roughly between positions 10 and 20 which corroborates the intuition that the most relevant keywords can be seen once the first word has narrowed the category and the second word focuses the query onto the product. Further, looking at the tail end we see that the importance of $\beta_i$ goes down and it becomes a flat curve indicating that the later keywords are not as relevant to determining the product.

Given that the attention weights behave as the BLUE weights we now provide a quantitative justification for why the weighted average out-performs the unweighted average in terms of the variance of the inferred product (vector).  
\begin{lemma}\label{lem:query-variance}
Let the variances $\sigma_i^2$ of each trigram be proportional to $i$, Then the variance of the unweighted query vector is $\Omega(1)$ while the variance of the weighted query vector is $o(1)$.
\end{lemma}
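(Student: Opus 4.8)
The plan is to treat the (scaled) trigram vectors as independent unbiased estimators of the product vector $p$ and then compare, as the query length $k = n(q)$ grows, the variance of the uniform average against that of the inverse-variance (BLUE) average; the whole separation will come from the fact that the harmonic series diverges while the arithmetic mean of the variances does not shrink under uniform weights. To set up the estimators, note that by Lemma~\ref{lem:expectation-of-trigram} we have $\E{T}{\E{t_i}{t_i}} = \rho_i p$, so $X_i \triangleq t_i/\rho_i$ is unbiased for $p$; writing variance as the expected squared $\ell_2$ deviation $\sigma_i^2 = \E{}{\norm{X_i - p}^2}$, I take the linear regime $\sigma_i^2 = c\,i$ (for a constant $c>0$) as given, which is precisely the hypothesis of the lemma and is the form justified via Lemma~\ref{lem:variance-of-trigram} in the surrounding discussion. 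Because the AttEST model samples the trigrams of a query independently, the cross terms in any convex combination $\sum_i w_i X_i$ with $\sum_i w_i = 1$ vanish, leaving the clean identity $\E{}{\norm{\sum_i w_i X_i - p}^2} = \sum_i w_i^2 \sigma_i^2$, so everything reduces to bounding this quadratic form for two choices of weights.

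For the unweighted vector I set $w_i = 1/k$, giving variance $\frac{1}{k^2}\sum_{i=1}^k \sigma_i^2 = \frac{c}{k^2}\sum_{i=1}^k i = \frac{c(k+1)}{2k}$, which tends to $c/2$ and is therefore $\Omega(1)$: uniform averaging cannot drive the error to zero regardless of query length. For the weighted vector I substitute the BLUE weights $w_i = (1/\sigma_i^2)/\sum_j(1/\sigma_j^2)$ from Lemma~\ref{lem:blue}; the quadratic form then simplifies to the standard inverse-variance-sum value $\big(\sum_{i=1}^k 1/\sigma_i^2\big)^{-1} = c\big(\sum_{i=1}^k 1/i\big)^{-1} = c/H_k$, where $H_k$ is the $k$-th harmonic number. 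Since $H_k = \Theta(\log k) \to \infty$, this is $o(1)$, which establishes the claimed dichotomy.

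The arithmetic here is routine; the only genuine content, and the step I would handle most carefully, is the reduction from $d$-dimensional vectors to a scalar variance together with the vanishing of the cross terms. This relies on the explicit AttEST independence assumption for the trigrams and on the fact that the operative notion of variance is the trace-type quantity $\E{}{\norm{\cdot - p}^2}$ from Lemma~\ref{lem:variance-of-trigram}, so that the inverse-variance weighting of Lemma~\ref{lem:blue} applies with a single common weight vector across coordinates. Once that reduction is justified, the contrast between the non-vanishing $\frac{1}{k^2}\sum_{i=1}^k i \to c/2$ and the vanishing $c/H_k$, driven purely by the divergence of the harmonic series, is exactly what produces the $\Omega(1)$ versus $o(1)$ gap.
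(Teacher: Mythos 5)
Your proof is correct and follows essentially the same route as the paper's: both arguments reduce to computing $\sum_i w_i^2\sigma_i^2$ for uniform versus inverse-variance weights and then contrasting $\frac{1}{k^2}\sum_{i=1}^k i \to 1/2$ with $1/H_k \to 0$. Your version is slightly more careful about the setup (scaling by $\rho_i$ to get unbiased estimators, invoking independence to kill cross terms, and carrying the proportionality constant $c$), but the substance is identical.
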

\vspace{-1em}
\begin{proof}
\vspace{-0.5em}
  \begin{enumerate}
  \item The variance of the unweighted query vector is:
      $\Var{\frac{\sum_{i=1}^k t_i}{k}} = \frac{\sum_{i=1}^k\sigma_i^2}{k^2}$
  \item The variance of the weighted query vector is:
      $\Var{\frac{\sum_{i=1}^k\frac{t_i}{\sigma_i^2}}{\sum_{i=1}^k\frac{1}{\sigma_i^2}}} = \frac{1}{\sum_{i=1}^k \frac{1}{\sigma_i^2}}$
\end{enumerate}

For the unweighted case, set $\sigma_i = \sqrt{i}$ and see that $\Var{\frac{\sum_{i=1}^k t_i}{k}} = \frac{\sum_{i=1}^k\sigma_i^2}{k^2} = \frac{\sum_{i=1}^ki}{k^2} = \frac{k(k+1)}{2k^2} =$ $1/2 + 1/2k = \Omega(1)$. For the weighted case, we get that $\Var{\frac{\sum_{i=1}^k\frac{t_i}{\sigma_i^2}}{\sum_{i=1}^k\frac{1}{\sigma_i^2}}} = \frac{1}{\sum_{i=1}^k \frac{1}{\sigma_i^2}} = \frac{1}{\sum_{i=1}^k \frac{1}{i}} = 1/H_k$ where $H_k$ is the $k^{\textrm{th}}$ Harmonic number, which is approximately $\ln k$. In particular, if $\sigma_i \propto \sqrt{i}$, then the variance is $o(1)$.
\end{proof}
\vspace{-0.6em}

Lemma \ref{lem:query-variance} shows that, with growing query length, the variance with attention weights vanishes to zero whereas the variance of the unweighted (or uniformly weighted) case remains at a constant bounded away from 0. In other words the explanation for the success of attention mechanisms is that they provide an efficient method to reduce the variance (increase concentration) in the estimation of the ground truth.

\begin{figure}[t]
\centering

\hspace{-3ex}
\begin{subfigure}[b]{0.33\textwidth}
\includegraphics[width=\textwidth]{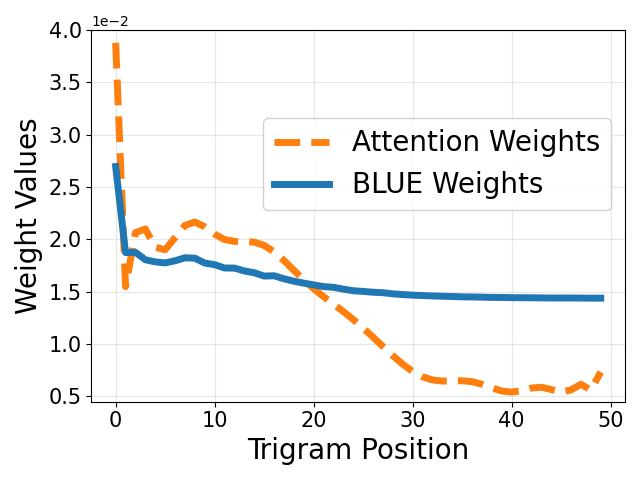}
\subcaption[]{}\label{fig:left}
\end{subfigure}\hspace{-1ex}
\begin{subfigure}[b]{0.33\textwidth}
\includegraphics[width=\textwidth]{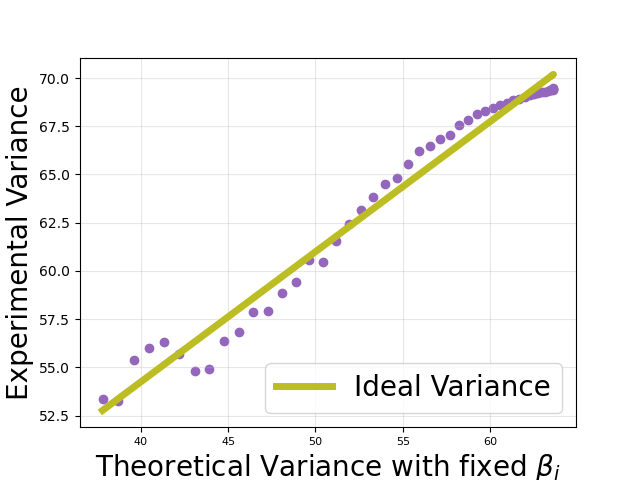}
\subcaption[]{}\label{fig:middle}
\end{subfigure}\hspace{-3ex}
\begin{subfigure}[b]{0.33\textwidth}
\includegraphics[width=\textwidth]{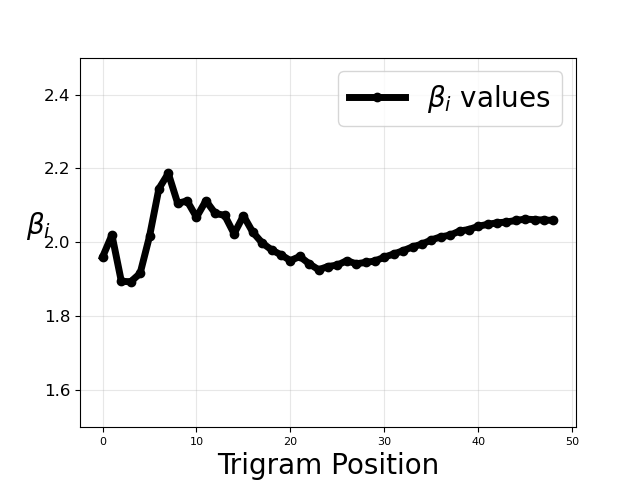}
\subcaption[]{}\label{fig:right}
\end{subfigure}
\caption{From left to right: The left plot is of the weights produced by the attention mechanism, and the BLUE weights predicted by Lemma~\ref{lem:blue}. Weights are averaged over all queries, and shown by the position of the trigram ($i$) in the query. 
% The second plot is blue variance vs. data variance based on a simple functional form for $\alpha_i$. The third is blue variance vs. data variance with adjustments due to $\beta_i$. 
The middle plot is a scatter plot of variance predicted by theory (without changing $\beta_i$ values) alongside the variance from the data. The best linear fit shown by the Ideal Variance curve is achieved by using the $\beta_i$ values shown in the right plot.
% , accounting for $\alpha_i$ but not for $\beta_i$. The last plot is Beta Values to fix theory predicted variance to match linear line
}\label{fig:attention-weights-vs-BLUE-weights}
% \vspace{-1.5em}
\end{figure}

\vspace{-1em}
\section{Conclusion}
\vspace{-1em}
Predicting edges in a graph from node attributes is a general problem beyond e-commerce. Future work can consider alternative datasets with images as node attributes. However, study of alternative datasets requires careful adaptation of the theoretical analysis of this work as the AttEST model relies on the following properties of e-commerce data sets: node attributes are short texts, large vocabulary size, and the correlation of position and weight of trigrams. Analogues for these properties must be identified when analyzing alternative datasets. For instance, would the decomposition of text into trigrams work well in languages without concatenative morphology, such as Arabic or Hebrew? 

\nocite{}
\bibliographystyle{abbrv}
\bibliography{references}

\appendix
\section{Proofs}
Proofs are organized by paper sections.
\subsection{Trigram Properties}
\begin{lemma}\label{appendix:lem:expectation-of-trigram}
  For all $i$, the mean of a trigram $t_i = t$ sampled using the distribution $P_{p,i}$ from the set of all trigrams $T$ is the following, where $\rho_i = m\alpha_i\beta_i\exp(\beta_i^2/2)/Z_{p, i}$.
  \begin{align*}
  \E{T \sim \mathcal{N}(0, I)^m}{\E{t \sim P_p}{t}} = \rho_i \cdot p
  \end{align*}
\end{lemma}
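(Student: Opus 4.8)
The plan is to unfold the double expectation by first writing the inner expectation over $t_i \sim P_{p,i}$ as a sum over the $m$ trigram vectors of $T$, weighted according to the mixture in Equation~\ref{eq:mixture-distribution}, and only then taking the outer expectation over the Gaussian draw of $T$. Concretely, for a fixed $T$,
\begin{align*}
\E{t_i \sim P_{p,i}}{t_i} = \alpha_i \sum_{t \in T} \frac{t\, \exp(\beta_i \ip{t,p})}{Z_{p,i}} + (1-\alpha_i)\,\frac{1}{m}\sum_{t \in T} t,
\end{align*}
so the computation splits cleanly along the two mixture components, and I would treat each separately.

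First I would dispatch the uniform (noise) component. Since each trigram vector is drawn independently from $\mathcal{N}(0,I)$, which has mean zero, we get $\E{T \sim \mathcal{N}(0,I)^m}{\frac{1}{m}\sum_{t \in T} t} = 0$, so the entire $(1-\alpha_i)$ term vanishes under the outer expectation. This matches the intuition that the uniform noise carries no directional information about the product $p$.

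The substantive part is the exponential component. The key tool is the Gaussian moment-generating gradient identity: for $t \sim \mathcal{N}(0,I)$ and any fixed vector $s$, $\E{t \sim \mathcal{N}(0,I)}{t\, e^{\ip{s,t}}} = \nabla_s\, e^{\norm{s}^2/2} = s\, e^{\norm{s}^2/2}$. Applying it with $s = \beta_i p$ and using $\norm{p} = 1$ (products are sampled uniformly from the unit sphere $S^d$) gives $\E{t \sim \mathcal{N}(0,I)}{t\, e^{\beta_i \ip{t,p}}} = \beta_i p\, e^{\beta_i^2/2}$. Summing this identical contribution over the $m$ i.i.d. trigrams and collecting the prefactor $\alpha_i/Z_{p,i}$ yields $\frac{m\,\alpha_i\,\beta_i\, e^{\beta_i^2/2}}{Z_{p,i}}\cdot p = \rho_i\, p$, which is exactly the claimed form.

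The main obstacle is that the partition function $Z_{p,i} = \sum_{t' \in T}\exp(\beta_i \ip{t',p})$ depends on the same random set $T$ that appears in the numerator, so the outer expectation does not factor into a product of independent numerator and denominator expectations. To handle this I would invoke the concentration of the partition function (Lemma~\ref{lem:concentration-of-Z}): since $Z_{p,i}$ lies within $(1 \pm \eps_z)Z_i$ with probability at least $1 - \delta$, it behaves as an essentially deterministic constant that may be pulled outside the expectation, reducing the exponential term to the per-trigram Gaussian integral computed above (up to a $(1 \pm \eps_z)$ factor absorbed into the statement via $Z_{p,i}$). The remaining steps are routine bookkeeping: checking the Gaussian gradient identity coordinatewise by differentiating $e^{\norm{s}^2/2}$, and tracking the factor of $m$ arising from the $m$ identically distributed summands.
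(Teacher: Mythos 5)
Your proof is correct and follows essentially the same route as the paper's: split along the mixture, kill the uniform component using the zero mean of the Gaussian, and evaluate the exponential component as a Gaussian integral (your MGF-gradient identity $\E{t}{t\,e^{\ip{s,t}}}=s\,e^{\norm{s}^2/2}$ is just a compact form of the paper's decomposition of $t$ into components parallel and perpendicular to $p$ followed by a one-dimensional Gaussian computation). If anything you are more careful than the paper, which silently pulls the $T$-dependent $Z_{p,i}$ outside the expectation over $T$, whereas you explicitly flag this dependence and justify the step via the concentration of the partition function (Lemma~\ref{lem:concentration-of-Z}).
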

\begin{proof}
 Start by expanding the inner expectation over the $P_p$ distribution,
  \begin{align*}
    & \E{T \sim \mathcal{N}(0, I)^m}{\sum_{t \in T} \left(\alpha_i \cdot \frac{\exp{\beta_i\ip{p, t}}}{Z_{p, i}} + \frac{1-\alpha_i}{m}\right)t} \\
    % = & \sum_{t \in T} \E{T \sim \mathcal{N}(0, I)^m}{ \left(\alpha_i \cdot \frac{\exp{\ip{p, t}}}{Z_{p, i}} + \frac{1-\alpha_i}{m}\right)t} \\
    = & \sum_{t \in T} \E{T \sim \mathcal{N}(0, I)^m}{ \alpha_i \cdot \frac{\exp{\beta_i\ip{p, t}}}{Z_{p, i}} t} + \E{T \sim \mathcal{N}(0, I)^m}{ \frac{1-\alpha_i}{m} t}
  \end{align*}
  Note that the right term is just equal to zero since $t$ is sampled from the spherical Gaussian centered at the origin. The left term only depends on a single $t$ so we can change the expectation to be over a single $t$ sampled from a spherical Gaussian and replace the sum with a factor $m$.
  \begin{align*}
    = m \E{t \sim \mathcal{N}(0, I)}{ \alpha_i \cdot \frac{\exp{\beta_i\ip{p, t}}}{Z_{p, i}} t}
    = \frac{m \alpha_i}{Z_{p, i}} \E{t \sim \mathcal{N}(0, I)}{t\cdot\exp({\beta_i\ip{p, t}})}
  \end{align*}
  We take the orthogonal decomposition of the vector $t$ to be $t_{\parallel}$ and $t_{\perp}$ which are in directions parallel and perpendicular to $p$ respectively. Note that this allows us to set $\ip{p, t_{\perp}}$ to 0.
  \begin{align*}
    & = \frac{m \alpha_i}{Z_{p, i}} \E{t \sim \mathcal{N}(0, I)}{(t_{\parallel} + t_{\perp})\cdot\exp{\beta_i\ip{p, (t_{\parallel} + t_{\perp})}}} \\
    % & = \frac{m \alpha_i}{Z_{p, i}} \E{t \sim \mathcal{N}(0, I)}{(t_{\parallel} + t_{\perp})\cdot\exp{\left(\ip{p, t_{\parallel}}+ \ip{p, t_{\perp}}\right)}} \\
    & = \frac{m \alpha_i}{Z_{p, i}}\bigg( \E{t \sim \mathcal{N}(0, I)}{t_{\perp}\cdot\exp{\left(\beta_i\ip{p, t_{\parallel}}\right)}} + \\ & \hspace{2cm}\E{t \sim \mathcal{N}(0, I)}{t_{\parallel}\cdot\exp{\left(\beta_i\ip{p, t_{\parallel}}\right)}} \bigg)\\
    & = \frac{m \alpha_i}{Z_{p, i}}\bigg( \E{t \sim \mathcal{N}(0, I)}{t_{\perp}} \E{t \sim \mathcal{N}(0, I)}{\exp{\left(\beta_i\ip{p, t_{\parallel}}\right)}} + \\& \hspace{2cm} \E{t \sim \mathcal{N}(0, I)}{t_{\parallel}\cdot\exp{\left(\beta_i\ip{p, t_{\parallel}}\right)}} \bigg)
  \end{align*}
  We can factor the expectation in the last line since $t_{\parallel}$ is a random variable independent of $t_{\perp}$. Since $t_{\perp}$ is a linear transformation of $t$ and therefore, it is also a mean-zero Gaussian. Therefore, the first term in the sum goes to zero. Since $t_{\parallel}$ is a rank-$1$ linear transformation of $t$, we can compute it as a one-dimensional Gaussian with mean $0$ and variance $\|p\|_2^2 = 1$.
  % Link to Wolframalpha doing the integration
  % https://www.wolframalpha.com/input/?i=integrate+x+*+exp(x-x%5E2%2F2),+x%3D+-inf+to+inf
 \begin{align*}
     & = \frac{m \alpha_i p}{Z_{p, i}} \E{x \sim \mathcal{N}(0, 1)}{x\cdot\exp{\left(\beta_ix\right)}}\\
     % & = \frac{m \alpha_i p}{Z_{p, i}\sqrt{2 \pi\|p\|_2^2}} \int_{- \infty}^{\infty}{x\cdot\exp{\left(\beta_ix - \frac{x^2}{2\|p\|_2^2}\right)}} ~ dx\\
     % & = \frac{m \alpha_i p}{Z_{p, i}\sqrt{2 \pi}} \int_{- \infty}^{\infty}{x\cdot\exp{\left(\beta_ix - \frac{x^2}{2}\right)}} ~ dx\\
     % & = \frac{m \alpha_i p}{Z_{p, i}\sqrt{2 \pi}} \sqrt{2\pi} \beta_i\exp(\beta_i^2/2) \\
     & = \frac{m\alpha_i\beta_i\exp(\beta_i^2/2)}{Z_{p, i}} \cdot p
 \end{align*}
 \end{proof}

 \begin{lemma}\label{appendix:lem:variance-of-trigram}
The expected $\ell_2$-distance squared of a sampled trigram from its mean is $\Theta(d \ln d) - \rho_i^2$
\end{lemma}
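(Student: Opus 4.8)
The plan is to reduce the statement to a single second-moment computation and then evaluate that moment with the Gaussian-tilting machinery already used to prove Lemma~\ref{appendix:lem:expectation-of-trigram}. Since that lemma gives $\E{}{t_i} = \rho_i p$ with $\norm{p} = 1$, the bias--variance identity immediately yields
\[
\E{T \sim \mathcal{N}(0,I)^m}{\E{t_i \sim P_{p,i}}{\norm{t_i - \rho_i p}^2}} \;=\; \E{}{\norm{t_i}^2} - \rho_i^2 .
\]
So the whole task collapses to showing $\E{}{\norm{t_i}^2} = \Theta(d \ln d)$, and the $-\rho_i^2$ term is then inherited verbatim from Lemma~\ref{appendix:lem:expectation-of-trigram}.

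To compute $\E{}{\norm{t_i}^2}$, I would split it along the two components of the mixture $P_{p,i}$. The uniform component contributes $(1-\alpha_i)$ times the average squared norm of a trigram vector, which is unaffected by $p$. For the exponential component I would mirror the earlier proof: use Lemma~\ref{lem:concentration-of-Z} to replace $Z_{p,i}$ by its concentrated value $Z_i$, collapse $\sum_{t \in T}$ into $m$ times a single-vector expectation, and decompose $t = t_{\parallel} + t_{\perp}$ into its parts parallel and perpendicular to $p$. Writing $x = \ip{t, p}$, the weight $\exp(\beta_i x)$ tilts only the one-dimensional parallel marginal, so by completing the square $\E{}{\exp(\beta_i x)\, x^2} = e^{\beta_i^2/2}(1 + \beta_i^2)$, while the independent perpendicular part contributes $e^{\beta_i^2/2}\,\E{}{\norm{t_{\perp}}^2}$. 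Recombining and substituting the definition $\rho_i = m\alpha_i\beta_i e^{\beta_i^2/2}/Z_i$ rewrites the exponential part as $\tfrac{\rho_i}{\beta_i}\big(\E{}{\norm{t_{\perp}}^2} + 1 + \beta_i^2\big)$.

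The observation that fixes the leading order is that both mixture components see the \emph{same} trigram-norm scale: under the scaling $v_t = s_t \hat v_t$ of Lemma~\ref{lem:concentration-of-Z}, the typical squared norm of an isotropic trigram is $\Theta(d \ln d)$, and the exponential tilt only adds the $O(\beta_i^2)$ correction along the $p$-direction. Adding the two components and absorbing the lower-order tilt terms then gives $\E{}{\norm{t_i}^2} = \Theta(d\ln d)$, which combined with the first display completes the proof.

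I expect the main obstacle to be controlling the exponential component cleanly: the weight $\exp(\beta_i\ip{t,p})$ preferentially selects high-norm trigrams, so I must argue that this reweighting does not inflate the second moment beyond the $\Theta(d\ln d)$ scale. This is exactly where Lemma~\ref{lem:concentration-of-Z} is essential --- it lets me pull the partition function out as the deterministic $Z_i$ so that the tilted second moment factorizes across the parallel and perpendicular directions, confining the effect of $\beta_i$ to the harmless $1 + \beta_i^2$ term rather than letting it couple to the full $d$-dimensional norm.
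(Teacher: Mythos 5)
Your reduction to a single second-moment computation is exactly the paper's first step: the paper takes $p = (1,0,\dots,0)$, expands $\norm{t-\rho_i p}^2$ coordinate-wise, shows $\E{}{t_1} = \rho_i$ by the same tilting computation as in the mean lemma, and arrives at $\E{}{\norm{t}^2} - \rho_i^2$; your bias--variance identity is just a cleaner packaging of that. The divergence --- and the gap --- is in how $\E{}{\norm{t}^2}$ is evaluated. Your factorized tilted-Gaussian computation is internally consistent but it does not produce $\Theta(d\ln d)$: for an isotropic (spherical Gaussian) trigram $\E{}{\norm{t_{\perp}}^2} = d-1$, so your exponential-component term $\tfrac{\rho_i}{\beta_i}\bigl(\E{}{\norm{t_{\perp}}^2}+1+\beta_i^2\bigr) \approx \alpha_i(d+\beta_i^2)$ (using $Z_i \approx m e^{\beta_i^2/2}$), and adding the uniform component gives $\E{}{\norm{t}^2} \approx d + \alpha_i\beta_i^2 = \Theta(d)$. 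The sentence in which you assert that ``the typical squared norm of an isotropic trigram is $\Theta(d\ln d)$'' is unsupported and is contradicted by your own computation; nothing in the scaling $v_t = s_t\hat v_t$ of Lemma~\ref{lem:concentration-of-Z} supplies the missing $\ln d$ factor.

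The paper gets its $\ln d$ from a mechanism that is entirely absent from your argument: it upper-bounds $\E{}{\norm{t}^2}$ by $\E{}{\norm{t}^2 \mid t = \argmax_{t'\in T}\norm{t'}^2}$, i.e., by the expected squared norm of the \emph{largest} of the $m$ trigram vectors, and then invokes the Gumbel approximation for $\chi^2$ block maxima to evaluate that as $\Theta(d\ln d)$. This is precisely the selection effect you dismiss as ``confined to the harmless $1+\beta_i^2$ term'': the paper handles the tilt's preference for high-norm trigrams by passing to the extreme over the finite set $T$, whereas you let $Z_{p,i}$ concentrate and factor the tilt through a single Gaussian, under which that preference contributes only $O(\beta_i^2)$ along the $p$-direction. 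You should be aware that the paper is itself inconsistent on exactly this point --- the final sentence of its proof reads ``$\Theta(d)-\rho_i^2$,'' matching your computation rather than the lemma statement, and its extreme-value step gives only an upper bound, so concluding a two-sided $\Theta$ from it is not valid either. But as a proof of the lemma \emph{as stated}, your argument has a genuine gap: the $\Theta(d\ln d)$ is asserted, not derived, and the derivation you actually carry out yields $\Theta(d)$.
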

\begin{proof}\label{appendix:proof:variance-of-trigram}
Without loss of generality we assume that the product vector is the $(1, 0, \dots, 0)$ vector.
  \begin{align*}
    &\E{T \sim \mathcal{N}(0, I)^m}{\E{t \sim P_p}{\|t - \rho_i p\|^2}}\\
    &= \E{T \sim \mathcal{N}(0, I)^m}{\E{t \sim P_p}{(t_1 - \rho_i)^2 + t_2^2 + \dots + t_d^2}}\\
    &= \E{T \sim \mathcal{N}(0, I)^m}{\E{t \sim P_p}{t_1^2 + \rho_i^2 - 2\rho_i t_1 + t_2^2 + \dots + t_d^2}}\\
    &= \rho_i^2 - 2\rho_i\E{T \sim \mathcal{N}(0, I)^m}{\E{t \sim P_p}{t_1}} + \E{T \sim \mathcal{N}(0, I)^m}{\E{t \sim P_p}{\|t\|^2}}\\
      % & = \E{T \sim \mathcal{N}(0, I)^m}{\E{t \sim P_p}{\|t\|^2 + \rho_i^2 \|p\|^2 - 2 \rho_i \ip{t, p}}} \\
      % & = \rho_i^2 + \E{T}{\sum_{t \in T} P_p(t) \|t\|^2} - 2 \rho_i \E{T}{\sum_{t \in T} P_p(t) \ip{t, p}} \\
      % & = \rho_i^2 + \E{T}{\sum_{t \in T} \left( \alpha \frac{\exp(\ip{t, p})}{Z_{p, i}} \|t\|^2 + \frac{(1 - \alpha)}{m} \|t\|^2 \right)} - 2 \rho_i \E{T}{\sum_{t \in T} \left( \alpha \frac{\exp(\ip{t,p})}{Z_{p, i}} \ip{t,p} + \frac{(1 -  \alpha)}{m} \ip{t, p} \right)}
  \end{align*}

  Note that the first expectation term can be dealt with easily enough.
  \begin{align*}
    &\E{T \sim \mathcal{N}(0, I)^m}{\E{t \sim P_p}{t_1}}\\
    &= \E{T \sim \mathcal{N}(0, I)^m}{\sum_{t\in T} t_1 \left(\alpha_i \cdot \frac{\exp{\beta_i\ip{p, t}}}{Z_{p, i}} + \frac{1-\alpha_i}{m}\right)}\\
    &= \frac{\alpha_i }{Z_{p, i}}\sum_{t\in T}\E{T \sim \mathcal{N}(0, I)^m}{ t_1 \cdot \exp{\beta_it_1}} + \E{T \sim \mathcal{N}(0, I)^m}{t_1\frac{1-\alpha_i}{m}}\\
  \end{align*}
  Now the second term is zero since the expectation is simply over a standard normal random variable. Also note that the sum in the first term can be replaced with a factor of $m$ since each term in the sum has the same value. The first expectation is over a standard normal variable and it can be explicitly calculated just like in the proof of Lemma~\ref{appendix:lem:expectation-of-trigram} to show that it is equal to $\rho_i$.
  \begin{align*}
    \E{T \sim \mathcal{N}(0, I)^m}{\E{t \sim P_p}{t_1}}
    = \frac{\alpha_i m}{Z_{p, i}}\E{x \sim \mathcal{N}(0, 1)}{ x \cdot \exp{\beta_ix}} \;=\; \frac{\alpha_i m \beta_i \exp{\beta_i^2/2}}{Z_{p, i}} = \rho_i
  \end{align*}

  That only leaves the expectation of the squared norm of a trigram left.

\begin{align*}
    \E{T \sim \mathcal{N}(0, I)^m}{\E{t \sim P_p}{\|t\|^2}} & \leq  \E{T \sim \mathcal{N}(0, I)^m}{{\|t\|^2} ~|~ t = \argmax_{t' \in T} \|t'\|^2} \\
    & \approx \E{X \sim \mathcal{G}}{X}
\end{align*}

    where the last line follows by the well-known approximation of $\chi^2$ block maxima by the Gumbel distribution $\mathcal{G}$ as $m \rightarrow \infty$ (\cite{embrechts1997modeling}, 4, p.156). The mean of the Gumbel distribution is:
    $$ \frac{\gamma}{2} + 2\left(\ln m + \left(\frac{d}{2}-1\right) \ln \ln m - \ln \left(\Gamma\left(\frac{d}{2}\right)\right)\right) = \Theta(d \ln d)$$
    where $\gamma$ is the Euler-Mascheroni constant, and the equality follows from Stirling's approximation for constant $m$. \cite{gasull2015maxima} suggests nonstandard approximation terms which converge much faster to the Gumbel distribution for our parameter regime.

  Combining all of the above we get that the expected $\ell_2$ distance squared of a trigram from its mean is: $\Theta(d) - \rho_i^2$.
\end{proof}

\begin{lemma}[Concentration of partition functions, Lemma 2.1 from~\cite{arora16randwalk}]
  \label{appendix:lem:concentration-of-Z}
  For trigrams vectors of the form $v_t = s_t \cdot \hat{v}_t$, where $\hat{v}_t$ comes from a spherical Gaussian distribution, and for $\eps_z = \widetilde{O}(1/\sqrt{m})$ and $\delta = \exp(-\Omega(\log^2 m))$ there exists $Z_i$ s. t.,
    $\prOther{p\sim S}{(1-\eps_z)Z_i \leq Z_{p, i} \leq (1+\eps_z) Z_i} \geq 1 - \delta$.
\end{lemma}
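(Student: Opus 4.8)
The plan is to follow the concentration argument of \cite{arora16randwalk}, adapting it to the positional spread parameter $\beta_i$. For a fixed unit vector $p$, the quantity $Z_{p,i} = \sum_{t \in T} \exp(\beta_i \ip{t, p})$ is a sum of $m$ independent and identically distributed random variables, one per trigram vector. The first step is to identify the centering value $Z_i$. Since each $\hat v_t$ is spherically Gaussian, the projection $\ip{\hat v_t, p}$ is a one-dimensional Gaussian whose law does not depend on the direction of $p$; hence $\mathbb E[\exp(\beta_i \ip{v_t, p})]$ is the same for every unit $p$, and I would set $Z_i := m\,\mathbb E_{s, g}[\exp(\beta_i s g)]$ where $g \sim \mathcal N(0,1)$ and $s = s_t$ is the scaling variable. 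Evaluating the Gaussian moment generating function gives $\mathbb E_g[\exp(\beta_i s g)] = \exp(\beta_i^2 s^2/2)$, so $Z_i = m\,\mathbb E_s[\exp(\beta_i^2 s^2/2)]$ in closed form, independent of $p$---exactly the ``constant partition function'' promised for large $m$. Because the spherical Gaussian prior is rotationally invariant, the law of $Z_{p,i}$ (as the trigram vectors vary) is identical for every fixed $p$; it therefore suffices to establish concentration for a single fixed direction, after which a Fubini/averaging argument over $p \sim S$ and over the ensemble, followed by Markov's inequality, transfers the bound to the randomized statement of the lemma.

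The second step is the concentration itself: showing $Z_{p,i}$ stays within a $(1 \pm \eps_z)$ factor of $Z_i$ with failure probability at most $\delta$. The difficulty is that the summands $\exp(\beta_i \ip{v_t,p})$ are heavy-tailed (log-normal in flavor), so neither their range nor their variance is uniformly bounded, and a naive Hoeffding bound does not yield the claimed $\eps_z = \widetilde O(1/\sqrt m)$. I would handle this by a truncation argument: split each summand at a threshold of order $\exp(c\sqrt{\log m})$. A Gaussian tail bound shows that with probability $1 - \exp(-\Omega(\log^2 m))$ every one of the $m$ projections $\ip{v_t, p}$ is $O(\sqrt{\log m})$, so on this high-probability event the truncated and untruncated sums coincide and the tail perturbs neither the mean nor the deviation beyond the budget. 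On the truncated summands---now bounded and with controlled variance---I would apply a Bernstein inequality; the $m$-fold averaging drives the relative deviation down to $\widetilde O(1/\sqrt m)$ while the Bernstein exponent, combined with the truncation threshold, produces the $\exp(-\Omega(\log^2 m))$ failure probability.

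The main obstacle is precisely the heavy-tailed nature of the summands: one must choose the truncation level carefully so that (i) the discarded tail is small enough not to shift the mean beyond the $\eps_z$ budget, yet (ii) the retained bulk is bounded tightly enough for Bernstein's inequality to deliver both the $1/\sqrt m$ scaling and the $\log^2 m$ exponent simultaneously. Getting these two requirements to align is the crux of RAND-WALK's Lemma 2.1, and the same balancing act carries over here with $\beta_i$ absorbed into the effective scale of the Gaussian projection, since the substitution $\ip{v_t,p} \mapsto \beta_i \ip{v_t,p}$ leaves every step of the estimate intact. As the statement is quoted directly from \cite{arora16randwalk}, I would ultimately cite their argument for the full calculation rather than reproduce the routine constants.
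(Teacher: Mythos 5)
The paper never proves this lemma: it is imported verbatim as Lemma~2.1 of the RAND-WALK paper, with $\beta_i$ inserted, and the appendix restates it without argument. So there is no in-paper proof to compare against; the relevant benchmark is the cited source, and your sketch does reproduce its architecture faithfully --- center at $Z_i = m\,\mathbb{E}[\exp(\beta_i s g)]$, use rotational invariance to fix the direction $p$, truncate the log-normal summands, and apply Bernstein to the bounded remainder --- and your observation that the substitution $\ip{v_t,p}\mapsto\beta_i\ip{v_t,p}$ only rescales the effective Gaussian is correct. One quantitative step as written would fail, though. Truncating the projections at $O(\sqrt{\log m})$ (summands at $\exp(c\sqrt{\log m})$) gives a per-term Gaussian tail of only $\exp(-\Omega(\log m)) = m^{-\Omega(1)}$, so after the union bound over $m$ trigrams the truncation event fails with polynomially small probability, not the claimed $\exp(-\Omega(\log^2 m))$. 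To hit the stated $\delta$ you must truncate the projections at $\Theta(\log m)$, i.e.\ summands at $m^{\Theta(1)}$: each projection then exceeds the threshold with probability $\exp(-\Omega(\log^2 m))$, the union bound costs only a factor $\exp(\log m)$, and Bernstein with summand bound $M = m^{\Theta(1)}$, total variance $O(m)$ (the second moment $\exp(2\beta_i^2 s^2)$ is $O(1)$ per term since $s$ is bounded), and deviation $t = \sqrt{m}\log^2 m$ still yields exponent $\tfrac{t^2}{2(\sigma^2 + Mt/3)} = \Omega(\log^2 m)$, which is simultaneously consistent with $\eps_z = \widetilde{O}(1/\sqrt{m})$. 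With that correction your balancing act closes, and deferring the remaining constants to the cited reference is exactly what the paper itself does.
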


\subsection{Low rank of PMI}
\begin{theorem}\label{appendix:thm:co-occurrence-probability}
Let $q, q'$ be query vectors generated by the AttEST model. Denote the probability that $q, q'$ co-occur in the query graph by $\pr{q, q'}$. Then,
\begin{align*}
    \pr{q, q'} = & ~ (1\pm \eps') \cdot \Bigg(\prod_{i \in [n(q)]} \frac{\alpha_{i}}{Z_i}\prod_{j \in [n(q')]} \frac{\alpha_{j}}{Z_j} \Bigg) \cdot \exp\Bigg(\frac{\norm{\sum_{i\in[n(q)]}\beta_i t_i + \sum_{j\in[n(q')]}\beta_j t_j}^2}{2d}\Bigg)
\end{align*}
\end{theorem}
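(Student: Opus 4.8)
The plan is to follow the proof sketch and, crucially, to track every multiplicative $(1\pm\eps)$ factor so that the four sources of error collapse into the single $(1\pm\eps')$ in the statement. First I would condition on the two generating products $p, p'$. By the edge rule of the AttEST model, $q, q'$ co-occur exactly when $\norm{p - p'} \le \eps_p$, so I average the conditional co-occurrence probability over $p$ uniform on $S^d$ with $p'$ constrained to this neighborhood. Because the model samples every trigram independently given its product, the conditional probability factors as
\[
  \pr{q, q' \mid p, p'} = \prod_{i\in[n(q)]} P_{p,i}(t_i)\;\prod_{j\in[n(q')]} P_{p',j}(t'_j),
\]
where each factor is the mixture of Equation~\ref{eq:mixture-distribution}.

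Second, I would discard the uniform component of each mixture. Since $\alpha_i \in (1/2,1]$ and co-occurring queries are generated by trigrams drawn near their product, the exponential term $\alpha_i \exp(\beta_i\ip{t_i, p})/Z_{p,i}$ dominates the noise term $(1-\alpha_i)/m$, so replacing $P_{p,i}(t_i)$ by its exponential part costs only a multiplicative $(1\pm\eps)$ per factor. Third, I would invoke Lemma~\ref{lem:concentration-of-Z}: with probability $1-\delta$ over $p$ we have $Z_{p,i} = (1\pm\eps_z)Z_i$, which lets me pull the position-dependent constants $\alpha_i/Z_i$ to the front and replace each partition function by the $p$-independent $Z_i$, again at a controlled multiplicative cost. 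After these two steps the conditional probability is, up to error,
\[
  \Bigg(\prod_{i} \frac{\alpha_i}{Z_i}\prod_j \frac{\alpha_j}{Z_j}\Bigg)\exp\Bigg(\sum_i \beta_i\ip{t_i, p} + \sum_j \beta_j\ip{t'_j, p'}\Bigg).
\]

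Fourth, I would use the edge condition $\norm{p - p'}\le\eps_p$ to replace $p'$ by $p$ in the exponent; since the sampled $t'_j$ have norm $\Theta(\sqrt d)$ (the trigram set $T$ is isotropic), this perturbs the exponent by $O(\eps_p\sum_j\beta_j\norm{t'_j})$, a lower-order term folded into the error. Writing $u = \sum_{i}\beta_i t_i + \sum_j \beta_j t'_j$, the exponent collapses to $\ip{u, p}$, and the only remaining task is the spherical average $\E{p\sim S}{\exp(\ip{u,p})}$. For fixed $u$, the marginal $\ip{u,p}$ has mean $0$ and variance $\norm{u}^2/d$, and in high dimension it is approximately $\mathcal{N}(0,\norm{u}^2/d)$, so the Gaussian moment generating function gives
\[
  \E{p\sim S}{\exp(\ip{u,p})} = (1\pm\eps)\exp\!\Big(\tfrac{\norm{u}^2}{2d}\Big),
\]
which is exactly the claimed exponential factor. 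Aggregating the four multiplicative errors into one $(1\pm\eps')$ finishes the argument.

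The main obstacle I expect lies in the error bookkeeping at two points. The first is justifying that the uniform component may be dropped with only multiplicative error: this needs the exponential mass near $p$ to dominate uniformly, which rests on $\alpha_i > 1/2$ together with a lower bound on the typical size of $\ip{t_i,p}$ for the trigrams actually sampled. The second is controlling the Gaussian approximation of $\ip{u,p}$ on the sphere: because the $t_i$ are isotropic with norm $\Theta(\sqrt d)$, we have $\norm{u}^2/d = \Theta(1)$, so the moment generating function is evaluated in an $\Theta(1)$-variance regime where the deviation of the spherical marginal from a true Gaussian must be quantified rather than merely invoked, and this is where the bulk of the technical care in bounding $\eps'$ will be spent.
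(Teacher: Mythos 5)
Your proposal follows essentially the same route as the paper's proof: factor the conditional co-occurrence probability over independently sampled trigrams, replace $Z_{p,i}$ by $Z_i$ via Lemma~\ref{lem:concentration-of-Z}, drop the uniform mixture component at multiplicative cost, substitute $p$ for $p'$ using the edge condition and the $\Theta(\sqrt{d})$ trigram norms, and evaluate the spherical expectation of $\exp(\ip{u,p})$ as $\exp(\norm{u}^2/2d)$ (the paper cites Lemma~A.5 of the RAND-WALK paper for this last step, which is the same Gaussian-marginal estimate you re-derive). You also correctly flag the two places where the quantitative bookkeeping lives — the condition needed to dominate the uniform component (the paper imposes a lower bound on $\beta_i$) and the $\Theta(1)$-variance regime of the moment generating function — so the proposal is sound.
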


\begin{theorem}\label{appendix:thm:single-co-occurrence-probability}
Let $q$ be a query vector generated by the AttEST model. Then,
\begin{align*}
      \pr{q} &= (1\pm \eps'') \cdot \Bigg(\prod_{i \in [n(q)]} \frac{\alpha_{i}}{Z_i}\Bigg)\cdot \exp\Bigg(\frac{\norm{\sum_{i\in[n(q)]}\beta_i t_i}^2}{2d}\Bigg)\\
\end{align*}
\end{theorem}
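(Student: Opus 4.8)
The plan is to mirror the proof of Theorem~\ref{thm:co-occurrence-probability}, exploiting the substantial simplification that only a single product vector is involved, so the delicate step of transferring randomness from a second product $p'$ back to $p$ disappears entirely. I would begin by marginalizing the generation of $q$ over the product that could have produced it. Since the trigrams $t_1, \dots, t_{n(q)}$ of $q$ are fixed and sampled independently given $p$,
\begin{align*}
\pr{q} = \E{p \sim S}{\pr{q \mid p}} = \E{p \sim S}{\prod_{i \in [n(q)]} P_{p,i}(t_i)}.
\end{align*}

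Next I would replace each partition function $Z_{p,i}$ by the $p$-independent constant $Z_i$ using Lemma~\ref{lem:concentration-of-Z}: on the event that every $Z_{p,i}$ lies in $(1\pm\eps_z)Z_i$ (which holds with probability at least $1 - n(q)\delta$ by a union bound over the $n(q)$ positions), each mixture factor equals $\alpha_i \exp(\beta_i \ip{t_i,p})/Z_i$ up to a $(1\pm\eps_z)$ multiplicative error. I would then discard the uniform noise component $(1-\alpha_i)/m$ of the mixture: because $\alpha_i \in (1/2,1]$ is bounded away from zero while $m \to \infty$, this term is negligible relative to the exponential component and contributes only a further lower-order factor. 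Collecting these approximations across all positions gives, on the good event,
\begin{align*}
\prod_{i \in [n(q)]} P_{p,i}(t_i) = \big(1\pm O(\eps_z)\big)\Bigg(\prod_{i\in[n(q)]}\frac{\alpha_i}{Z_i}\Bigg)\exp\Bigg(\Big\langle \sum_{i\in[n(q)]}\beta_i t_i,\; p\Big\rangle\Bigg).
\end{align*}

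The final step is to evaluate the remaining expectation over $p \sim S^d$ of $\exp(\ip{u,p})$ with $u = \sum_{i\in[n(q)]} \beta_i t_i$ fixed. Since $p$ is uniform on the unit sphere we have $\E{p}{p_ip_j} = \delta_{ij}/d$, so $\ip{u,p}$ has mean zero and variance $\norm{u}^2/d$ and is approximately $\mathcal{N}(0,\norm{u}^2/d)$ in high dimension; its moment generating function evaluated at $1$ is therefore $\exp(\norm{u}^2/(2d))$, producing exactly the claimed factor $\exp(\norm{\sum_{i\in[n(q)]}\beta_i t_i}^2/(2d))$. Pulling the position-independent constants $\prod_i \alpha_i/Z_i$ out of the expectation then completes the derivation.

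I expect the main obstacle to be the bookkeeping of errors rather than any single hard computation. The concentration bound of Lemma~\ref{lem:concentration-of-Z} fails with probability $\delta$, and on that bad event the integrand $\exp(\ip{u,p})$ is not a priori bounded, so one must argue that this rare event contributes negligibly to $\E{p\sim S}{\cdot}$, e.g.\ by bounding $\norm{u}$ (hence the integrand) and using that $\delta = \exp(-\Omega(\log^2 m))$ is superpolynomially small. One must also verify that dropping the uniform component is uniformly controlled even for trigrams whose alignment $\ip{t_i,p}$ is small, and that the per-position errors aggregated over all $n(q)\le N$ positions, together with the Gaussian approximation to the spherical moment generating function, all combine into the single stated factor $(1\pm\eps'')$.
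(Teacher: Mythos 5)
Your outline is essentially the paper's own route: the paper does not spell out a separate proof of this theorem, stating only that it ``follows a similar argument to Theorem~\ref{appendix:thm:co-occurrence-probability},'' and your plan --- marginalize over $p$, concentrate the partition functions via Lemma~\ref{appendix:lem:concentration-of-Z}, drop the uniform component, and convert $\E{p\sim S}{\exp(\ip{u,p})}$ into $\exp(\norm{u}^2/2d)$ (the paper invokes Lemma~\ref{appendix:lem:ip-to-norm} for this last step) --- is precisely that argument with the $p'\to p$ transfer removed.

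One caveat: your stated reason for discarding the uniform component (that $\alpha_i$ is bounded away from zero while $m\to\infty$, so the uniform term is negligible relative to the exponential component) is not correct as written. Since $Z_i=\Theta(m)$ (it is a sum of $m$ terms each of constant order), the exponential component $\alpha_i\exp(\beta_i\ip{t_i,p})/Z_i$ is itself of order $1/m$, the same order as the uniform term $(1-\alpha_i)/m$; letting $m\to\infty$ does not separate them. The paper's Lemma~\ref{appendix:lem:ignoring-uniform-component} makes this step work only by imposing a lower bound $\beta_i \geq \frac{1}{b\sqrt d}\log\big(\frac{\alpha_i}{1-\alpha_i}\cdot\frac{m}{Z_i}\cdot\frac{\eps_u}{a}\big)$ so that $\exp(\beta_i\ip{t_i,p})$ dominates, and even then only on an event $\FF_u$ of probability $1-e^{-\Theta(d)}$, which forces the same conditioning and bad-event bookkeeping you already anticipate for the partition-function failure event. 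You do flag the small-alignment case as something ``to verify,'' so the structure of your argument survives, but that verification is exactly where the extra hypothesis on $\beta_i$ must enter; without it the step genuinely fails.
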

The proof of Theorem~\ref{appendix:thm:single-co-occurrence-probability} follows a similar argument to Theorem~\ref{appendix:thm:co-occurrence-probability} and together they imply:
\begin{corollary}\label{appendix:cor:pmi-dot-product}
For $\eps''' = O(\eps' + \eps'')$,
$\textrm{PMI}(q, q') \triangleq \log \Bigg( \frac{\Pr[q, q']}{\Pr[q]\Pr[q']} \Bigg) = \frac{\ip{q, q'}}{d} + \eps'''.$
\end{corollary}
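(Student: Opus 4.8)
The plan is to derive the corollary purely algebraically from Theorem~\ref{appendix:thm:co-occurrence-probability} and Theorem~\ref{appendix:thm:single-co-occurrence-probability}, introducing no new probabilistic input. First I would write $\textrm{PMI}(q, q') = \log\pr{q,q'} - \log\pr{q} - \log\pr{q'}$ and substitute the three closed forms. The crucial structural observation is that the ``partition-function prefactors'' match up exactly: the factor $\prod_{i\in[n(q)]}\alpha_i/Z_i \cdot \prod_{j\in[n(q')]}\alpha_j/Z_j$ appearing in $\pr{q,q'}$ is precisely the product of the prefactor $\prod_{i\in[n(q)]}\alpha_i/Z_i$ of $\pr{q}$ and the prefactor $\prod_{j\in[n(q')]}\alpha_j/Z_j$ of $\pr{q'}$, with the same constants $Z_i$ (justified by Lemma~\ref{appendix:lem:concentration-of-Z}) across all three expressions. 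Hence in the ratio $\pr{q,q'}/(\pr{q}\pr{q'})$ these prefactors cancel \emph{exactly}, leaving only the exponentials and the multiplicative error factors.

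Next I would collapse the three exponentials. Writing $A = \sum_{i\in[n(q)]}\beta_i t_i$ and $B = \sum_{j\in[n(q')]}\beta_j t_j$ for the (unnormalized) query vectors, the surviving exponent in the ratio is $(\norm{A+B}^2 - \norm{A}^2 - \norm{B}^2)/(2d)$. Expanding $\norm{A+B}^2 = \norm{A}^2 + 2\ip{A, B} + \norm{B}^2$, this collapses to $\ip{A, B}/d$. Identifying the query embedding $q$ with $A$ and $q'$ with $B$ (the query vector \emph{is} this $\beta$-weighted sum of its trigram vectors), the exponential contributes the main term $\ip{q, q'}/d$ to the PMI.

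Finally I would absorb the error. The ratio carries a multiplicative factor $(1\pm\eps')/(1\pm\eps'')^2$ from the three approximation guarantees, so taking logarithms contributes an additive term $\log(1\pm\eps') - 2\log(1\pm\eps'')$. Using $\log(1+x) = x + O(x^2)$ for the small parameters $\eps', \eps''$, this is $O(\eps') + O(\eps'') = O(\eps' + \eps'')$, which I would name $\eps'''$, yielding $\textrm{PMI}(q, q') = \ip{q, q'}/d + \eps'''$ as claimed. The computation is routine once the two theorems are in hand; the only steps I would be careful about are the bookkeeping that turns the multiplicative $(1\pm\eps)$ guarantees into a controlled additive error through the logarithm, and verifying that the $\alpha_i/Z_i$ cancellation is genuinely exact (the same $Z_i$ in all three formulas) rather than merely approximate. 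It is also worth stating the identification $q = \sum_i \beta_i t_i$ explicitly, since it is this definition that makes the cross term exactly $\ip{q,q'}$ rather than some other bilinear form.
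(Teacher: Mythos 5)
Your proposal is correct and matches the paper's (implicit) argument: the paper states the corollary as following directly from Theorems~\ref{appendix:thm:co-occurrence-probability} and~\ref{appendix:thm:single-co-occurrence-probability}, and the computation you describe --- exact cancellation of the $\alpha_i/Z_i$ prefactors, expansion of $\norm{A+B}^2$ to isolate the cross term $\ip{A,B}/d$, and absorption of the multiplicative $(1\pm\eps)$ factors into an additive $O(\eps'+\eps'')$ via the logarithm --- is precisely the omitted derivation. Your explicit remarks on the identification $q=\sum_i\beta_i t_i$ and on the exactness of the $Z_i$ cancellation are worthwhile points that the paper leaves unstated.
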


To prove Theorem \ref{appendix:thm:co-occurrence-probability}, we make use of the following lemmas.

\begin{lemma}[Length of spherical Gaussian concentrates]\label{appendix:lem:spherical-gaussian-norm-concentrates}
  Let $x$ be a spherical Gaussian vector in $d$ dimensions, then
  \begin{align*}
    \pr{\|x\| \geq b\sqrt{d}}  &\leq \exp{(-b'\cdot d)}\\
    \pr{\|x\| \leq \sqrt{d}/c}  &\leq \exp{(-d/c')}
\end{align*}
\end{lemma}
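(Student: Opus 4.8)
The plan is to reduce both inequalities to standard tail bounds on a chi-squared random variable and then apply the Chernoff (moment generating function) method. Writing $x = (x_1,\dots,x_d)$ with the $x_i$ i.i.d.\ standard normals, the squared norm $\norm{x}^2 = \sum_{i=1}^d x_i^2$ is distributed as $\chi^2_d$ with mean $d$, so both inequalities are statements about $\norm{x}^2$ deviating from its mean. I would use the well-known MGF $\mathbb{E}[\exp(\lambda \norm{x}^2)] = (1-2\lambda)^{-d/2}$, valid for $\lambda < 1/2$, as the single analytic input.

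For the upper tail I first square both sides, $\pr{\norm{x} \geq b\sqrt{d}} = \pr{\norm{x}^2 \geq b^2 d}$, and then apply Markov's inequality to $\exp(\lambda \norm{x}^2)$ for $\lambda \in (0, 1/2)$, which yields the bound $(1-2\lambda)^{-d/2}\exp(-\lambda b^2 d)$. Minimizing over $\lambda$ (the optimizer is $\lambda = \tfrac12(1 - 1/b^2)$, which lies in $(0,1/2)$ precisely because $b>1$) produces a bound of the form $\exp(-b' d)$ with the explicit constant $b' = \tfrac12(b^2 - 1 - \ln b^2)$.

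For the lower tail I instead apply Markov to $\exp(-\lambda \norm{x}^2)$ for $\lambda > 0$, using the same MGF with $\lambda$ replaced by $-\lambda$, which gives $\pr{\norm{x}^2 \leq d/c^2} \leq (1+2\lambda)^{-d/2}\exp(\lambda d/c^2)$. Optimizing over $\lambda > 0$ (optimizer $\lambda = \tfrac12(c^2-1)$, positive when $c>1$) again gives a bound $\exp(-d/c')$ with $1/c' = \tfrac12(\ln c^2 + 1/c^2 - 1)$ an explicit positive constant depending only on $c$.

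The only genuinely substantive point, and the one I would flag as the main (though mild) obstacle, is verifying that the two exponents are strictly negative, i.e.\ that $b'$ and $1/c'$ are positive. Both reduce to the elementary inequality $\ln u < u - 1$ for $u \neq 1$ (taken at $u = b^2 > 1$ for the upper tail and $u = 1/c^2 < 1$ for the lower tail), so the constants are positive exactly in the intended regime $b>1$, $c>1$; everything else is a routine Chernoff computation. As an alternative one could simply cite the Laurent--Massart bounds, which package both tails into a single ready-made statement.
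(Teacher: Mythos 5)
Your proof is correct, and it takes a somewhat different route from the paper's. The paper does not carry out the Chernoff computation at all: it simply cites the corollary of Lemma 1 of Laurent and Massart, which packages the two $\chi^2_d$ tail bounds $\pr{\norm{x}^2 - d \geq 2\sqrt{dt} + 2t} \leq e^{-t}$ and $\pr{d - \norm{x}^2 \geq 2\sqrt{dt}} \leq e^{-t}$, and then reparametrizes (setting $t$ proportional to $d$) to obtain the stated forms, solving for the relation between $b$ and $b'$ (respectively $c$ and $c'$). You instead prove the tail bounds from scratch via the moment generating function $(1-2\lambda)^{-d/2}$, optimize $\lambda$ exactly, and obtain the explicit constants $b' = \tfrac12(b^2 - 1 - \ln b^2)$ and $1/c' = \tfrac12(\ln c^2 + 1/c^2 - 1)$, together with the verification via $\ln u < u - 1$ that these are positive precisely when $b, c > 1$. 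Your version is self-contained and yields sharper, explicit constants (indeed the standard Chernoff exponents for $\chi^2_d$, which are tighter than what the Laurent--Massart reparametrization gives); the paper's version is shorter and defers the analytic work to the citation. You correctly identify the one substantive point (positivity of the exponents), which the paper glosses over entirely, and you even note the citation-based alternative the paper actually uses. One small caveat: the lemma as stated does not announce the hypotheses $b > 1$ and $c > 1$, so you are right to flag that the bounds are vacuous (or false) outside that regime --- this is an implicit assumption in the paper as well.
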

\begin{proof}
The above lemma is simply a reparametrization of the corollary of Lemma 1 from \cite{laurent-massart-2000-concentration-of-chi-2}. For the first bound,
  \begin{align*}
    \pr{\|t\|^2  - d  \geq 2 \sqrt{dx} + 2x}  \leq \exp{(-x)} \\\implies \pr{\|t\| \geq b\sqrt{d}}  \leq \exp{(-b'\cdot d)}
\end{align*}
One can solve $b'= 2b^2+2b+1$ to get the exact relation between $b$ and $b'$. % A similar argument gets us the other bound.
For the second bound,
  \begin{align*}
    \pr{d - \|t\|^2  \geq 2 \sqrt{dx}}  \leq \exp{(-x)} \\\implies \pr{\|t\| \geq b\sqrt{d}}  \leq \exp{(-b'\cdot d)}
\end{align*}
One can solve $c'= 2c^2+2c+1$ to get the exact relation between $c$ and $c'$.
\end{proof}

\begin{lemma}\label{appendix:lem:approximate-partitions-as-Z-i}
We can remove the dependence of the partition function on the products to approximate the following
  \begin{align*}
    \prod_{i \in [n(q)]} \alpha_{i} \frac{\exp\big(\beta_i\ip{t_i, p}\big)}{Z_{p, i}}  + \frac{(1 - \alpha_{i})}{m} \quad\cdot\prod_{j \in [n(q')]} \alpha_{j}\frac{\exp\big(\beta_j\ip{t'_j, p'}\big)}{Z_{p', j}}  + \frac{(1 - \alpha_{j})}{m}
  \end{align*}
  with
  \begin{align*}
    (1+O(\eps_z))^2\prod_{i \in [n(q)]} \alpha_{i} \frac{\exp\big(\beta_i\ip{t_i, p}\big)}{Z_i}  + \frac{(1 - \alpha_{i})}{m} \quad\cdot\prod_{j \in [n(q')]} \alpha_{j}\frac{\exp\big(\beta_j\ip{t'_j, p'}\big)}{Z_{j}}  + \frac{(1 - \alpha_{j})}{m}
  \end{align*}
  where $\eps_z=\widetilde{O}(1/\sqrt{m})$.
\end{lemma}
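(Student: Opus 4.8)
The plan is to reduce the claim to a per-position application of the concentration-of-partition-functions result, Lemma~\ref{appendix:lem:concentration-of-Z}, and then control how the errors compound across positions. First I would note that the query $q$ is generated by a single product $p$ and $q'$ by a single product $p'$. Applying Lemma~\ref{appendix:lem:concentration-of-Z} at each position $i \in [n(q)]$ (with spread parameter $\beta_i$) and $j \in [n(q')]$, and taking a union bound over the at most $n(q) + n(q') \le 2N$ positions, I get that with probability at least $1 - 2N\delta = 1 - \exp(-\Omega(\log^2 m))$ over the draw of $p, p'$, every partition function concentrates simultaneously: $(1-\eps_z)Z_i \le Z_{p, i} \le (1+\eps_z)Z_i$, and likewise $(1-\eps_z)Z_j \le Z_{p', j} \le (1+\eps_z)Z_j$. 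Because $N$ is a constant and $\delta$ is super-polynomially small, the union bound leaves the failure probability in the same regime.

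Next I would handle a single factor of the product. Writing $A_i = \alpha_i \exp(\beta_i\ip{t_i, p})/Z_i$ and $B_i = (1-\alpha_i)/m$, both nonnegative, the true factor is $\alpha_i \exp(\beta_i\ip{t_i, p})/Z_{p, i} + B_i$. On the concentration event, $\alpha_i \exp(\beta_i\ip{t_i, p})/Z_{p, i}$ lies in the interval $[A_i/(1+\eps_z),\, A_i/(1-\eps_z)]$, so the true factor lies between $A_i/(1+\eps_z) + B_i \ge (A_i+B_i)/(1+\eps_z)$ and $A_i/(1-\eps_z) + B_i \le (A_i+B_i)/(1-\eps_z)$. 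Hence each factor equals $(1 \pm O(\eps_z))$ times the corresponding factor $A_i + B_i$ that uses $Z_i$ in place of $Z_{p, i}$. The observation making this clean is that the uniform-noise term $B_i$ carries no $Z_{p, i}$ dependence, yet adding the same nonnegative $B_i$ to both the numerator and the scaled bound preserves the multiplicative $(1\pm O(\eps_z))$ sandwich, since $B_i/(1+\eps_z) \le B_i \le B_i/(1-\eps_z)$.

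Finally I would multiply the per-factor bounds. The product over $i\in[n(q)]$ incurs a combined factor $(1\pm O(\eps_z))^{n(q)}$; since $n(q) \le N$ is a constant and $\eps_z = \widetilde{O}(1/\sqrt m)$, we have $(1\pm O(\eps_z))^{n(q)} = 1 \pm O(N\eps_z) = 1 \pm O(\eps_z)$, absorbing the constant exponent into the $O(\cdot)$. The same holds for the product over $j \in [n(q')]$, and multiplying the two query contributions yields the stated overall factor $(1+O(\eps_z))^2$.

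I expect the main obstacle to be the middle step. The mixture distribution is an additive combination of an exponential component (whose normalizer $Z_{p, i}$ is exactly the quantity being perturbed) and a uniform component (which is untouched), so the substitution is not literally a single rescaling of the whole factor. The care required is precisely the sandwich argument showing that a multiplicative perturbation of one nonnegative summand propagates to the same-order multiplicative perturbation of the sum. A minor secondary point is confirming that collapsing $(1+O(\eps_z))^{n(q)+n(q')}$ down to $(1+O(\eps_z))^2$ is justified by the boundedness of the query length $N$, rather than being an artifact of loose bookkeeping.
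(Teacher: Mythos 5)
Your proof is correct and follows essentially the same route as the paper's: apply the concentration of partition functions (Lemma~\ref{appendix:lem:concentration-of-Z}) at each position, sandwich each mixture factor between its $Z_i$-version scaled by $(1\pm\eps_z)^{-1}$, and absorb the bounded query length $N$ into the $O(\eps_z)$. You are in fact more careful than the paper on two points it glosses over --- the union bound over the $n(q)+n(q')$ positions, and the explicit observation that the additive uniform component $(1-\alpha_i)/m$, being independent of $Z_{p,i}$, still admits the same multiplicative sandwich.
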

\begin{proof}
  Lemma~\ref{appendix:lem:concentration-of-Z} implies that with probability at least $1 - \delta$, the following holds:
  \begin{align*}
    % \prod_{i \in [n(q)]} \frac{\alpha_{i}}{(1+\epsilon_z)} \cdot\frac{\exp\big(\beta_i\ip{t_i, p}\big)}{Z_{i}}  + \frac{(1 - \alpha_{i})}{(1+\epsilon_z)m} \quad\cdot\prod_{j \in [n(q')]} \frac{\alpha_{j}}{(1+\epsilon_z)} \cdot\frac{\exp\big(\beta_j\ip{t'_j, p'}\big)}{Z_{j}}  + \frac{(1 - \alpha_{j})}{(1+\epsilon_z)m}
    % \leq\\
    % \prod_{i \in [n(q)]} \alpha_{i} \frac{\exp\big(\beta_i\ip{t_i, p}\big)}{Z_{p, i}}  + \frac{(1 - \alpha_{i})}{m} \quad\cdot\prod_{j \in [n(q')]} \alpha_{j}\frac{\exp\big(\beta_j\ip{t'_j, p'}\big)}{Z_{p', j}}  + \frac{(1 - \alpha_{j})}{m}
    % \leq\\
    % \prod_{i \in [n(q)]} \frac{\alpha_{i}}{(1-\epsilon_z)} \cdot\frac{\exp\big(\beta_i\ip{t_i, p}\big)}{Z_{i}}  + \frac{(1 - \alpha_{i})}{(1-\epsilon_z)m} \quad\cdot\prod_{j \in [n(q')]} \frac{\alpha_{j}}{(1-\epsilon_z)} \cdot\frac{\exp\big(\beta_j\ip{t'_j, p'}\big)}{Z_{j}}  + \frac{(1 - \alpha_{j})}{(1-\epsilon_z)m}\\
   \frac{1}{(1+\epsilon_z)^{2n(q)}} \cdot \prod_{i \in [n(q)]} \alpha_i\frac{\exp\big(\beta_i\ip{t_i, p}\big)}{Z_{i}}  + \frac{(1 - \alpha_{i})}{m} \quad\cdot \prod_{j \in [n(q')]} \alpha_{j}\frac{\exp\big(\beta_j\ip{t'_j, p'}\big)}{Z_{j}}  + \frac{(1 - \alpha_{j})}{m}\\
    \leq
    \prod_{i \in [n(q)]} \alpha_{i} \frac{\exp\big(\beta_i\ip{t_i, p}\big)}{Z_{p, i}}  + \frac{(1 - \alpha_{i})}{m} \quad\cdot\prod_{j \in [n(q')]} \alpha_{j}\frac{\exp\big(\beta_j\ip{t'_j, p'}\big)}{Z_{p', j}}  + \frac{(1 - \alpha_{j})}{m}\\
    \leq
        \frac{1}{(1-\epsilon_z)^{2n(q')}} \prod_{i \in [n(q)]} \alpha_j\frac{\exp\big(\beta_i\ip{t_i, p}\big)}{Z_{i}}  + \frac{(1 - \alpha_{i})}{m} \quad\cdot \prod_{j \in [n(q')]} \alpha_{j}\frac{\exp\big(\beta_j\ip{t'_j, p'}\big)}{Z_{j}}  + \frac{(1 - \alpha_{j})}{m}
  \end{align*}
Using the approximation $1 + x \approx e^x$ for small $x$, we can replace the $(1 + \epsilon_z)^{2n(q)} \approx \exp(2n(q)) \approx (1 + 2n(q)\epsilon_z)$ and similarly for the other term completing the proof.
\end{proof}

This can be proven using Lemma~\ref{appendix:lem:concentration-of-Z} following an argument similar to the one in the proof of Theorem 2.1 in \cite{arora16randwalk}.

\begin{lemma}[Ignoring the uniform component]\label{appendix:lem:ignoring-uniform-component}
  If $\beta_i \geq \frac{1}{b\sqrt{d}}\cdot \log \bigg(\frac{\alpha_i}{1-\alpha_i} \cdot \frac{m}{Z_i} \cdot\frac{\eps_u}{a}\bigg)$ then with high probability the uniform component can be ignored without incurring too much error. Formally, let  $A=\alpha_{i} \frac{\exp\big(\beta_i\ip{t_i, p}\big)}{Z_i}$ be the exponential component, $B=\frac{(1 - \alpha_{i})}{m}$ be the uniform component then the following holds:
  \begin{align*}
    \pr{(1-\eps_u)(A+B)\leq A \leq (1+\eps_u)(A+B)} = 1 - e^{-\Theta(d)}
    % \pr{\alpha_{i} \frac{\exp\big(\beta_i\ip{t_i, p}\big)}{Z_i} \neq (1\pm \eps_u)~\bigg(\alpha_{i} \frac{\exp\big(\beta_i\ip{t_i, p}\big)}{Z_i}  + \frac{(1 - \alpha_{i})}{m}\bigg)} = e^{-O(x^2)}
  \end{align*}
  We denote by $\FF_u$ the event where the uniform component can be ignored. % e, so that $\pr{\FF_u} = 1 - e^{-\Theta(d)}$.
\end{lemma}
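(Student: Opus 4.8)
The plan is to dispose of the trivial direction first and then reduce the whole statement to a one-sided tail bound on a single Gaussian projection. The right-hand inequality $A \le (1+\eps_u)(A+B)$ holds unconditionally, with no appeal to the hypothesis on $\beta_i$: since $B \ge 0$ and $\eps_u \ge 0$ we have $A \le A+B \le (1+\eps_u)(A+B)$. So the entire content of the lemma lives in the left-hand inequality $(1-\eps_u)(A+B)\le A$, which rearranges to $(1-\eps_u)B \le \eps_u A$, i.e. to the single scalar statement $B/A \le \eps_u/(1-\eps_u)$. Thus I would restate the goal as: the ratio of the uniform component to the exponential component is below $\eps_u/(1-\eps_u)$ except on an event of probability $e^{-\Theta(d)}$, and define $\FF_u$ to be the complement of that bad event.

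Next I would make the ratio explicit. Substituting the definitions gives $B/A = \frac{(1-\alpha_i)}{m}\cdot\frac{Z_{p,i}}{\alpha_i}\exp(-\beta_i\ip{t_i,p})$, and I would invoke Lemma~\ref{appendix:lem:concentration-of-Z} to replace the random partition function $Z_{p,i}$ by the deterministic $Z_i$ at the cost of a factor $(1\pm\eps_z)$ with $\eps_z=\widetilde O(1/\sqrt m)$, which is absorbed into the constant $a$ appearing in the hypothesis. After this substitution the only surviving randomness sits in the exponent $\ip{t_i,p}$, the projection of the (spherical-Gaussian) trigram vector onto the unit product vector $p$. Taking logarithms, the bad event $\{B/A > \eps_u/(1-\eps_u)\}$ is exactly the one-sided event that $\ip{t_i,p}$ falls below a threshold determined by $\alpha_i$, $Z_i$, $\eps_u$ and $\beta_i$.

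The heart of the argument is then a concentration estimate for $\ip{t_i,p}$. Since the trigram vector is, up to a bounded scalar, a spherical Gaussian and $\norm{p}=1$, the projection $\ip{t_i,p}$ concentrates on the $\Theta(1)$ scale about its mean, so a deviation as large as $\Theta(\sqrt d)$ has probability only $e^{-\Theta(d)}$. Concretely I would obtain $\ip{t_i,p}\ge -b\sqrt d$ with probability $1-e^{-\Theta(d)}$, either directly from the Gaussian lower tail or via Cauchy--Schwarz together with the norm concentration of Lemma~\ref{appendix:lem:spherical-gaussian-norm-concentrates} (giving $\ip{t_i,p}\ge -\norm{t_i}\ge -b\sqrt d$); this is the event $\FF_u$. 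The role of the hypothesis $\beta_i \ge \frac{1}{b\sqrt d}\log\big(\frac{\alpha_i}{1-\alpha_i}\cdot\frac{m}{Z_i}\cdot\frac{\eps_u}{a}\big)$ is to calibrate the scale so that, on $\FF_u$, the $\sqrt d$-deviation of the projection scaled by $\beta_i$ is exactly matched against $\log\big(\frac{(1-\alpha_i)Z_i}{m\alpha_i}\cdot\frac{1}{\eps_u}\big)$, forcing $B/A\le \eps_u/(1-\eps_u)$ and hence the desired inequality.

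The step I expect to be the main obstacle is this final calibration: converting the multiplicative bound $B/A\le\eps_u/(1-\eps_u)$ into a clean one-sided tail event and verifying that the $\sqrt d$ in the hypothesis is precisely the deviation scale at which the projection has tail probability $e^{-\Theta(d)}$. Lining up the constants $a$ and $b$ so that the $(1\pm\eps_z)$ slack from Lemma~\ref{appendix:lem:concentration-of-Z} and the gap between $\eps_u/(1-\eps_u)$ and $\eps_u/a$ are fully absorbed is careful bookkeeping (and this is where the sign conventions in the threshold must be tracked most carefully); but the conceptual crux is the single observation that $\Theta(\sqrt d)$ is simultaneously the deviation scale giving failure probability $e^{-\Theta(d)}$ and the scale appearing in the lower bound on $\beta_i$.
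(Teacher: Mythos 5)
Your proposal follows essentially the same route as the paper's proof: the upper bound $A\le(1+\eps_u)(A+B)$ is dismissed as trivial since $B\ge 0$, the lemma is reduced to bounding the uniform component by a multiple of the exponential one, and the crux is the high-probability bound $\ip{t_i,p}\ge -\norm{t_i}\ge -b\sqrt{d}$ via Cauchy--Schwarz and Lemma~\ref{appendix:lem:spherical-gaussian-norm-concentrates}, after which the hypothesis on $\beta_i$ closes the calculation. The only cosmetic difference is that the paper works with the slightly stronger sufficient condition $\eps_u A\ge B$ rather than your exact ratio bound $B/A\le\eps_u/(1-\eps_u)$, and it carries the sign/calibration bookkeeping you flag in exactly the same (somewhat terse) way.
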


\begin{lemma}[Lemma A.5 from \cite{arora16randwalk}]\label{appendix:lem:ip-to-norm}
  Let $v\in \RR^d$ be a fixed vector with norm $\|v\| \le \kappa \sqrt{d}$ for absolute constant $\kappa$. Then for random variable $c$ with uniform distribution over the sphere, we have that
  \begin{equation}
    \log \E{}{\exp(\ip{v, d})} = \|v\|^2/{2d} \pm \epsilon_n
  \end{equation}
  where $\epsilon_n = \widetilde{O}(\frac{1}{d})$.
\end{lemma}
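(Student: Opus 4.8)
The plan is to reduce the expectation to a one-dimensional problem and then compare its power-series expansion term-by-term against the Gaussian moment generating function $\exp(\|v\|^2/2d)$. First I would invoke rotational invariance of the uniform distribution on the sphere: since only the length of $v$ matters, $\ip{v,c}$ has the same distribution as $\|v\|\,c_1$, where $c_1$ is the first coordinate of a uniform point on the sphere. Thus it suffices to analyze $\E{}{\exp(s\,c_1)}$ with $s=\|v\|$, and writing $u = s^2/d = \|v\|^2/d \le \kappa^2$ keeps track of the only scale that appears.

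Next I would expand the moment generating function as a power series. All odd moments of $c_1$ vanish by symmetry, and the even moments have the closed form $\E{}{c_1^{2k}} = (2k-1)!!/\prod_{j=0}^{k-1}(d+2j)$. Using $(2k)! = 2^k k!\,(2k-1)!!$ this gives the clean expression
\begin{equation*}
\E{}{\exp(s\,c_1)} = \sum_{k\ge 0} \frac{u^k}{2^k k!}\cdot C_k, \qquad C_k := \prod_{j=0}^{k-1}\frac{1}{1+2j/d} \le 1 .
\end{equation*}
The point of this bookkeeping is that $\sum_{k\ge 0} u^k/(2^k k!) = \exp(u/2) = \exp(\|v\|^2/2d)$ is exactly the target, so each correction factor $C_k$ measures the deviation of the spherical moments from their Gaussian counterparts.

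The main work is to show $\sum_{k} \frac{u^k}{2^k k!}(1-C_k) = O(1/d)$. For this I would use $1/(1+x)\ge 1-x$ together with $\prod_j(1-x_j)\ge 1-\sum_j x_j$ to get $1 - C_k \le k(k-1)/d \le k^2/d$ whenever $k \le d/2$, and then bound the resulting sum by $\tfrac1d\sum_{k} \frac{u^k k^2}{2^k k!} = \tfrac1d\cdot\tfrac{u}{2}\big(\tfrac{u}{2}+1\big)e^{u/2}$, which is a constant depending only on $\kappa$ since $u\le\kappa^2$. The tail $k>d/2$ is controlled crudely by $1-C_k\le 1$ together with the factorial decay of $u^k/(2^k k!)$, which makes that contribution superpolynomially small. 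Hence $\E{}{\exp(s\,c_1)} = e^{u/2}\,(1\pm O(1/d))$, and since $e^{u/2}=\Theta(1)$, taking logarithms yields $\log\E{}{\exp(\ip{v,c})} = \|v\|^2/2d \pm \epsilon_n$ with $\epsilon_n = \widetilde O(1/d)$.

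The step I expect to be the main obstacle is obtaining the tight $\widetilde O(1/d)$ rate rather than a weaker $\widetilde O(1/\sqrt d)$. A naive coupling to a Gaussian (writing $c = g/\|g\|$ for $g\sim\mathcal N(0,I)$ and replacing $\|g\|$ by $\sqrt d$) only controls the exponent pointwise and loses a factor of $\sqrt d$, because $\|v\|$ may be as large as $\kappa\sqrt d$ and the first-order fluctuation of $\|g\|$ does not cancel in such an argument. The moment expansion above is what recovers the correct order: the crucial cancellation is built directly into the exact even moments of $c_1$, and the boundedness $\|v\|^2/d\le\kappa^2$ is precisely what guarantees that the corrected series still converges and that its total deviation from the Gaussian series collapses to $O(1/d)$.
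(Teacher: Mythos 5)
Your argument is correct and complete. Note first that the paper does not prove this lemma at all: it is imported verbatim as Lemma~A.5 of \cite{arora16randwalk}, so there is no in-paper proof to compare against, and what you have written is a self-contained verification of the cited result. Your route — reduce to $\E{}{\exp(\|v\| c_1)}$ by rotational invariance, expand in the exact even moments $\E{}{c_1^{2k}} = (2k-1)!!/\prod_{j=0}^{k-1}(d+2j)$, and bound the correction factors $C_k$ against the Gaussian series $e^{u/2}$ — is essentially the same moment-expansion strategy used in the original reference, and every step checks out: the identity $(2k)! = 2^k k!\,(2k-1)!!$ gives the clean form $\sum_k \frac{u^k}{2^k k!}C_k$; the bound $1-C_k \le k(k-1)/d$ for $k \le d/2$ follows from $\frac{1}{1+x}\ge 1-x$ and $\prod_j(1-x_j)\ge 1-\sum_j x_j$; the weighted sum $\frac{1}{d}\sum_k \frac{(u/2)^k k^2}{k!} = \frac{1}{d}\cdot\frac{u}{2}(\frac{u}{2}+1)e^{u/2}$ is $O(1/d)$ because $u \le \kappa^2$; and the tail $k>d/2$ is superpolynomially small. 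Two minor remarks: since every $C_k \le 1$ your error is actually one-sided, $\E{}{\exp(\ip{v,c})} = e^{u/2}(1 - O(1/d))$, which is slightly stronger than the stated two-sided $\widetilde{O}(1/d)$ bound; and you correctly read $\ip{v,d}$ in the statement as the typo it is for $\ip{v,c}$.
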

\begin{proof}[Proof of Theorem~\ref{appendix:thm:co-occurrence-probability}]\label{appendix:proof:co-occurence-probability}
Our proof for the co-occurrence probability initially follows the structure of the proof of Theorem 2.2 in \cite{arora16randwalk}, particularly for the concentration of the partition functions. However, after that point our proof crucially diverges to deal with the mixture distributions of multiple trigrams. First, using the law of total expectation, we can write the probability of co-occurrence in terms of the probability of sampling by product vectors that are within distance $\eps$ of each other.
\begin{align*}
  \pr{q, q'} & = \E{p, p' \sim S}{\pr{q, q'} \;~\big|~\; \|p - p'\| \leq \eps_p/d)}\\
             & = \E{p \sim S}{\pr{q, q'} \;~\big|~\; \|p - p'\| \leq \eps_p/d)}\\
             & = \E{p \sim S}{\pr{q~|~p} \pr{q' ~|~ p'} } \\
            & = \E{p \sim S}{\prod_{i \in [n(q)]} \pr{t_i ~|~ p} \prod_{j \in [n(q')]} \pr{t'_j ~|~ p'}} \\
            & = \mathop{\mathbb{E}}_{p \sim S}\Bigg[\prod_{i \in [n(q)]} \alpha_{i} \frac{\exp\big(\beta_i\ip{t_i, p}\big)}{Z_{p, i}}+ \frac{(1 - \alpha_{i})}{m}\prod_{j \in [n(q')]} \alpha_{j}\frac{\exp\big(\beta_j\ip{t'_j, p'}\big)}{Z_{p', j}} + \frac{(1 - \alpha_{j})}{m}\Bigg]
\end{align*}
The second step is valid since the only property of $p'$ that we will use is that is it in an $\ell_2$-ball of radius $\eps$ around $p$. We now use Lemma~\ref{appendix:lem:approximate-partitions-as-Z-i} to remove the dependence of the partition functions on the product vectors $p$ and $p'$ to get the following:
% Using Lemma~\ref{lem:concentration-of-Z} we can assume that the partition functions $Z_{p, i}$ and $Z_{p', i}$ concentrate at $Z_i$ with high probability. Following an argument similar to \cite{arora16randwalk}, we can thus replace all occurrences of the partition functions with the constants $Z_i$ while incurring a small error factor.
% More formally, we condition on the event $F$ that the partition functions concentrate. When $F$ happens, the error incurred by approximating the partition functions as $Z$ is small and the co-occurrence probability is still within $O(1\pm \eps_F)$ for a small $\eps_F$. On the other hand, the probability of $F$ not happening is very small and the contribution of that term to the co-occurrence probability can be ignored while still being within $O(1\pm \eps_F)$.

\begin{align*}
    \pr{q, q'} & = (1+O(\eps_z))^2\cdot \mathop{\mathbb{E}}_{p \sim S}\Bigg[\prod_{i \in [n(q)]} \alpha_{i} \frac{\exp\big(\beta_i\ip{t_i, p}\big)}{Z_i}  \\&\quad+ \frac{(1 - \alpha_{i})}{m}\prod_{j \in [n(q')]} \alpha_{j}\frac{\exp\big(\beta_j\ip{t'_j, p'}\big)}{Z_j}  \quad+ \frac{(1 - \alpha_{j})}{m}\Bigg]
\end{align*}

Let $G$ be the term inside the expectation above, and $G'$ be the term without the uniform components. We condition on the event $\FF_u$ from Lemma~\ref{appendix:lem:ignoring-uniform-component}. When $\FF_u$ happens Lemma~\ref{appendix:lem:ignoring-uniform-component} allows us to ignore the uniform component.
\begin{align*}
  \E{p \sim S}{G} & = \E{p \sim S}{G~|~\FF_u}\E{}{\FF_u} + \E{p \sim S}{G~|~\overline{\FF_u}}\E{}{\overline{\FF_u}}\\
                  & = \E{p \sim S}{G~|~\FF_u}\pr{\FF_u} + \E{p \sim S}{G~|~\overline{\FF_u}}\pr{\overline{\FF_u}}\\
                  & = \E{p \sim S}{G~|~\FF_u}(1-\exp{(-\Theta(d))}) \\&\quad+ \E{p \sim S}{G~|~\overline{\FF_u}}(\exp{(-\Theta(d))})\\
  \implies \E{p \sim S}{G} & = \frac{(1-2\exp{(-\Theta(d))})}{(1-\exp{(-\Theta(d))})}\cdot\E{p \sim S}{G~|~\FF_u}\\&\quad \approx \E{p \sim S}{G|\FF_u} = \E{p \sim S}{G'}
\end{align*}
Some algebraic manipulations allow us to show that conditioning on $\FF_u$ allows us to focus on the $\E{p \sim S}{G'}$ term alone. Note that for practical purposes $\frac{(1-2\exp{(-\Theta(d))})}{(1-\exp{(-\Theta(d))})}$ is equal to 1 at values of $d\geq 300$. From here we get that the probability of co-occurence is:
\begin{align*}
  &(1+O(\eps_z))^2\cdot (1\pm O(\eps_{xyz})) \cdot \quad  \E{p \sim S}{\prod_{i \in [n(q)]} \alpha_{p, i} \frac{\exp\big(\beta_i\ip{t_i, p}\big)}{Z_i}\prod_{j \in [n(q)]} \alpha_{p', j}\frac{\exp\big(\beta_j\ip{t'_j, p'}\big)}{Z_{j}}} 
             \\& = (1+O(\eps_z))^2\cdot (1\pm O(\eps_{xyz})) \cdot \\& \quad  \prod_{i\in [n(q)]}\frac{\alpha_i}{Z_i}\cdot \prod_{j\in [n(q')]}\frac{\alpha_j}{Z_j}\cdot \mathop{\mathbb{E}}_{p \sim S}\Bigg[\exp\Big(\ip{\sum_{i \in [n(q)]} \beta_i t_i, p} \quad  + \ip{\sum_{j \in [n(q')]} \beta_j t'_j, p'}\Big)\Bigg]
\end{align*}
At this point we use the fact that $\|p-p'\|\leq \eps$ to approximate $p'$ as $p$.
\begin{align*}
  &{\exp\Big(\ip{\sum_{i \in [n(q)]} \beta_i t_i, p} + \ip{\sum_{j \in [n(q')]} \beta_j t'_j, p'}\Big)}\\
  &= {\exp\Big(\ip{\sum_{i \in [n(q)]} \beta_i t_i + \sum_{j \in [n(q')]} \beta_j t'_j, p}\Big) \exp\Big(\ip{\sum_{j \in [n(q')]} \beta_j t'_j, p' - p}\Big)}\\
  &\leq {\exp\Big(\ip{\sum_{i \in [n(q)]} \beta_i t_i + \sum_{j \in [n(q')]} \beta_j t'_j, p}\Big) \exp\Big(\|\sum_{j \in [n(q')]} \beta_j t'_j\|\cdot \|p' - p\|\Big)}\\
  &\leq {\exp\Big(\ip{\sum_{i \in [n(q)]} \beta_i t_i + \sum_{j \in [n(q')]} \beta_j t'_j, p}\Big) \exp\Big(\eps/d \cdot \sum_{j \in [n(q')]} \beta_j\| t'_j\| \Big)}\\
  \end{align*}
  Now we can use a similar argument as in the proof of Lemma~\ref{appendix:lem:variance-of-trigram} to show that the lengths of each trigram vector is $\Theta(\sqrt{d})$ with high probability. This allows us to say that with high probability, the second exponent is of the form $\exp(O(\eps_p))$. Substituting that in the overall expression we get,
  \begin{align*}
  \pr{q, q'} & = (1+O(\eps_z))^2\cdot (1\pm O(\eps_{xyz})) \cdot (1+O(\eps_{p})) \\&\quad  \cdot \prod_{i\in [n(q)]}\frac{\alpha_i}{Z_i}\cdot \prod_{j\in [n(q')]}\frac{\alpha_j}{Z_j}\cdot \mathop{\mathbb{E}}_{p \sim S}\Bigg[\exp\Big(\ip{\sum_{i \in [n(q)]} \beta_i t_i \\&\quad + \sum_{j \in [n(q')]} \beta_j t'_j, p}\Big)\Bigg]
\end{align*}

% Then we condition on the whether the partition function concentrates. We can show that the probability of this not happening is low and hence can be ignored without incurring too much error.
% \begin{align*}
%     & = \frac{(1+O(\eps_2))}{Z^{n(q)+n(q')}}\E{p \sim S}{\prod_{i \in [n(q)]} \alpha_{p, i} \exp\big(\beta_i\ip{t_i, p}\big)\prod_{j \in [n(q)]} \alpha_{p', j}\exp\big(\beta_j\ip{t'_j, p'}\big)} \\
% \end{align*}
% Since $p$ and $p'$ are $\eps$ close, we can rearrange the inner products in terms of $p$ and $p-p'$. The second type of inner product will be small since $\|p-p'\|$ is small. Some algebraic manipulations then give us the following expression:
% \begin{align*}
%     & = \frac{(1+O(\eps_3))}{Z^{n(q)+n(q')}} \cdot \Bigg(\prod_{i, j \in [n(q)]} \alpha_{p, i}\alpha_{p', j} \Bigg)\cdot \E{p \sim S}{\exp\Big(\ip{\sum_{i\in[n(q)]}\beta_i t_i + \sum_{j\in[n(q)]}\beta_j t'_j, p}\Big)}\\
% \end{align*}
Using Lemma~\ref{appendix:lem:ip-to-norm}, we can replace the expectation of the exponential of inner product with the exponential of the norm. This gives us the final form of the probability of co-occurrence.
\begin{align*}
  & (1\pm \eps') \prod_{i\in [n(q)]}\frac{\alpha_i}{Z_i}\cdot \prod_{j\in [n(q')]}\frac{\alpha_j}{Z_j}\\&\quad \cdot \exp\Bigg(\frac{\big\|\sum_{i \in [n(q)]} \beta_i t_i + \sum_{j \in [n(q')]} \beta_j t'_j\big\|^2}{2d}\Bigg)
\end{align*}
where $1- \eps' \leq  (1+O(\eps_z))^2\cdot (1\pm O(\eps_{xyz})) \cdot (1+O(\eps_{p})) \cdot (1\pm O(\eps_n)) \leq 1+\eps'$
% \begin{align*}
%     & = \frac{(1+O(\eps_4))}{Z^{n(q)+n(q')}} \cdot \Bigg(\prod_{i, j \in [n(q)]} \alpha_{p, i}\alpha_{p', j} \Bigg)\cdot \exp\Bigg(\frac{\norm{\sum_{i\in[n(q)]}\beta_i t_i + \sum_{j\in[n(q)]}\beta_j t'_j}^2}{2d}\Bigg)\\
% \end{align*}
\end{proof}
Now we prove Lemma~\ref{appendix:lem:ignoring-uniform-component}.
\begin{proof}[Proof of Lemma~\ref{appendix:lem:ignoring-uniform-component}]\label{appendix:proof:ignoring-uniform-component}
  Let $A = \alpha_{i} \frac{\exp\big(\beta_i\ip{t_i, p}\big)}{Z_i}$ and let $B = \frac{1-\alpha}{m}$. Since $B\geq 0$ the upper bound $A \leq A+B \leq (1+\eps_u) (A+B)$ always holds. It only remains to show that $A\geq (1-\eps_u)(A+B)$, and it suffices to show that $\eps_z A \geq B$. Since $\alpha \in [\nicefrac{1}{2}, 1]$ and the constant $Z_i = \Theta(m)$ for all $i$, we can further simplify the inequality to the following:
  \begin{align*}
    & \eps_u \alpha_{i} \frac{\exp\big(\beta_i\ip{t_i, p}\big)}{Z_i} \geq \frac{(1 - \alpha_{i})}{m}\\
    % \iff & \beta_i \geq \frac{1}{\ip{t_i, p}} \cdot \log \big(\frac{1-\alpha_i}{\alpha_i} \cdot \frac{Z_i}{m} \cdot \frac{1}{\eps_u}\big)\\
    % \iff & \beta_i \geq \frac{1}{\ip{t_i, p}} \cdot \Omega\bigg(\log \bigg(\frac{1}{\eps_u}\bigg)\bigg)\\
  \end{align*}
  Substituting the lower bound for $\beta_i$ into the LHS of the above inequality, we get: % Starting with our assumption regarding the lower bound of $\beta$, we proceed to show the above inequality.
  \begin{align*}
    \beta_i &\geq \frac{1}{b\sqrt{d}}\cdot \log \bigg(\frac{\alpha_i}{1-\alpha_i} \cdot \frac{m}{Z_i} \cdot\frac{\eps_u}{a}\bigg) \\
    &\implies \eps_u \alpha_{i} \frac{\exp\big(\beta_i\ip{t_i, p}\big)}{Z_i} \\&\quad \geq \eps_u \frac{\alpha_{i}}{Z_i} \exp\Bigg(\frac{\ip{t_i, p}}{b\sqrt{d}}\cdot \log \bigg(\frac{\alpha_i}{1-\alpha_i} \cdot \frac{m}{Z_i} \cdot\frac{\eps_u}{a}\bigg)\Bigg) \\
            & \geq \eps_u \frac{\alpha_{i}}{Z_i} \exp\Bigg(\frac{-\|t_i\|}{b\sqrt{d}}\cdot \log \bigg(\frac{\alpha_i}{1-\alpha_i} \cdot \frac{m}{Z_i} \cdot\frac{\eps_u}{a}\bigg)\Bigg) \\
            & \geq \eps_u \frac{\alpha_{i}}{Z_i} \exp\Bigg(-1 \cdot \log \bigg(\frac{\alpha_i}{1-\alpha_i} \cdot \frac{m}{Z_i} \cdot\frac{\eps_u}{a}\bigg)\Bigg) \\
            &\geq a \frac{1-\alpha_i}{m}
  \end{align*}
  We use the fact that the dot product is smallest when the vectors are in opposite directions. Then we use Lemma~\ref{appendix:lem:spherical-gaussian-norm-concentrates} to show that the norm of $\|t_i\|$ is upper bounded with probability $1 - \exp{(-\Theta(d))}$. % and then we use the high probability event that the norm of a Spherical Gaussian vector is upper bounded with high probability using the corollary of Lemma 1 from \cite{laurent-massart-2000-concentration-of-chi-2}.
%   \begin{align*}
%     \pr{\|t\|^2  - d  \geq 2 \sqrt{dx} + 2x}  \leq \exp{(-x)} \implies \pr{\|t\| \geq b\sqrt{d}}  \leq \exp{(-b'\cdot d)}
% \end{align*}
% One can solve $b'= 2b^2+2b+1$ to get the exact bound, we only care about the probability being $\exp{(-\Theta(d))}$ which proves our lemma.
\end{proof}

\subsection{Loss function derivation}

To derive the loss function, we start by maximizing the mutual information (MI) of the marginal distributions of each endpoint of the edge distribution $E$ while minimizing the MI between the marginal distribution of each endpoint of the non-edge distribution $\bar{E}$. This ensures that the amount of information derived from a query embedding about its related queries (and only its related queries) is maximized. Let $E_1, E_2$ be the marginal distributions for the first and second vertex in the edge sampled from $E$. Similarly, let $\bar{E}_1, \bar{E}_2$ be the marginal endpoint distributions for non-edges sampled from $\bar{E}$. Recall the definition of mutual information,
\begin{align*}
I(E_1; E_2) \triangleq & \sum_{q, q' \in Q} \Pr[q, q']~ \textrm{PMI}(q, q').
\end{align*}
We maximize the mutual information between $E_1$ and $E_2$ while minimizing it between $\bar{E}_1$ and $\bar{E}_2$.
\begin{align*}
    \argmax I(E_1; E_2) - I(\bar{E}_1; \bar{E}_2)
\end{align*}
We can equivalently maximize the exponential of the above.
\begin{align*}
    =& ~ \argmax ~\exp(I(E_1; E_2) - I(\bar{E}_1; \bar{E}_2))\\
    =& ~ \argmax ~\prod_{q, q' \in Q} \exp(\pr{q, q'}\textrm{PMI}(q, q') - \pr{q, q' \text{ not adjacent}]}\textrm{PMI}(q, q'))
    \intertext{For the first term, we can re-index the product over the queries $P(q)$ adjacent to $q$ since otherwise the empirical probability that the queries co-occur is $0$. For the second term, we can similarly re-index the product over the non-edges $N(q)$.}
    =& ~ \argmax ~\prod_{q \in Q, q' \in P(q)} \exp(\pr{q, q'}\textrm{PMI}(q, q')) \prod_{q, q' \in Q} \exp(-\pr{q, q' \text{ not adjacent}]}\textrm{PMI}(q, q'))\\
    % =& \argmax \prod_{q, q' \in D} (\frac{\pr{q, q'}}{\pr{q}\pr{q'}}})^{pr{q, q'}}\\
    % =& \argmax \prod_{q, q' \in D} (\frac{\pr{q, q'}}{\pr{q}\pr{q'}}})^{pr{q, q'}}\\
      =& ~ \argmax ~\prod_{q \in Q, q' \in P(q)} \exp(\textrm{PMI}(q, q'))^{\pr{q, q'}} \prod_{q \in Q, q' \in N(q)} \exp(-\textrm{PMI}(q, q'))^{\pr{q, q' \text{ not adjacent}]}}
      \intertext{The empirical value of $\pr{q, q'}$ is weight of the edge between $q, q'$ divided by the sum of all the edge weights. Let $W_{q, q'}$ be the weight of the edge between $q$ and $q'$ and let $W = \sum_{q, q'} W_{q, q'}$.}
    =& ~ \argmax ~\prod_{q \in Q, q' \in P(q)} \exp(\textrm{PMI}(q, q'))^{W_{q, q'}/W} \prod_{q \in Q, q' \in N(q)} \exp(-\textrm{PMI}(q, q'))^{\pr{q, q' \text{ not adjacent}]}}\\
    =& ~ \argmax ~\prod_{q \in Q, q' \in P(q)} \exp(\textrm{PMI}(q, q'))^{W_{q, q'}} \prod_{q \in Q, q' \in N(q)} \exp(-\textrm{PMI}(q, q'))^{\pr{q, q' \text{ not adjacent}]}}
    \intertext{In order to improve performance, we set the weight of all edges to be $1$. Since the vast majority of queries are unique, this is analogous to reducing the frequency of extremely common words in word embedding models.}
    \approx& ~ \argmax ~\prod_{q \in Q, q' \in P(q)}\exp (\textrm{PMI}(q, q')) \prod_{q \in Q, q' \in N(q)} \exp(-\textrm{PMI}(q, q'))^{\pr{q, q' \text{ not adjacent}]}}
    \intertext{For the second term, equalize the empirical probability that $q, q'$ are not adjacent to some negligible amount. This allows ignoring the exponent. }
    \approx& ~ \argmax ~\prod_{q \in Q, q' \in P(q)}\exp (\textrm{PMI}(q, q')) \prod_{q \in Q, q' \in N(q)} \exp(-\textrm{PMI}(q, q')) \\
    %=& \argmax ~\exp \left(\sum_{q \in Q, q' \in P(q)} \textrm{PMI}(q, q')\right)\\
    \intertext{Apply Corollary \ref{appendix:cor:pmi-dot-product}.}
    =& ~ \argmax ~\exp \left(\sum_{q \in Q, q' \in P(q)} \ip{v_q, v_{q'}}\right) \exp \left(\sum_{q \in Q, q' \in N(q)} -\ip{v_q, v_{q'}}\right) \\
     =& ~ \argmax ~\exp \left(\sum_{q \in Q, q' \in P(q)} \ip{v_q, v_{q'}} ~+ \sum_{q \in Q, q' \in N(q)} -\ip{v_q, v_{q'}}\right)
\end{align*}
 where the last line follows from Corollary \ref{appendix:cor:pmi-dot-product}. We can remove the exponential due to monotonicity. Note that in a small range around $0$, the sigmoid function may be approximated by an exponential allowing us to take the logarithm of the sigmoid for each term in the sums, and get the loss function.
 %In order to avoid exploding and vanishing gradients, we apply the logarithm of logistic function to smooth out the value of the inner product. Heuristically, applying a logarithm leaves the dynamics of local search algorithms (gradient descent) largely unchanged \cite{SV14}.
\vspace{-0.6em}
\begin{align*}\argmax ~\sum_{q \in Q, q' \in P(q)} \log (\sigma ( \ip{v_q, v_{q'}})) ~~~ +  \sum_{q \in Q, q' \in N(q)} -\log(\sigma(-\langle v_q, v_{q'} \rangle)) \end{align*}
% Thus, maximizing MI for adjacent queries and minimizing it for non-adjacent queries gives the loss function .

\subsection{Experimental validation via trigram variance}
\begin{lemma} \label{appendix:lem:blue}
Let $X_1, \dots, X_k$ be independent, real-valued random variables drawn from different distributions, such that all distributions have the same expectation $\mu$ but (potentially) different variances $\sigma_1, \dots, \sigma_k$, respectively. Then, the best linear unbiased estimator (BLUE) is $\sum_{i=1}^k w_iX_i$ where $w_i = \frac{1/\sigma_i^2}{\sum_{i=1}^k 1/\sigma_i^2}$.
\end{lemma}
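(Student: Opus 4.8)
The plan is to reduce the statement to a constrained quadratic minimization and solve it directly. Any linear estimator has the form $\hat\mu = \sum_{i=1}^k w_i X_i$ for fixed weights $w_i$. The first step is to translate the \emph{unbiased} requirement into a constraint on the weights: since $\E{}{X_i} = \mu$ for every $i$, linearity of expectation gives $\E{}{\hat\mu} = \mu \sum_{i=1}^k w_i$, so $\hat\mu$ is unbiased for all values of $\mu$ precisely when $\sum_{i=1}^k w_i = 1$.

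The second step is to write down the quantity to be minimized. Because the $X_i$ are independent, the variance of the estimator is $\Var{\hat\mu} = \sum_{i=1}^k w_i^2 \sigma_i^2$. Thus identifying the BLUE amounts to minimizing the quadratic form $\sum_{i=1}^k w_i^2 \sigma_i^2$ over all weight vectors subject to the single affine constraint $\sum_{i=1}^k w_i = 1$.

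The third step is to solve this optimization. I would use Lagrange multipliers: setting the gradient of $\sum_{i=1}^k w_i^2 \sigma_i^2 - \lambda\big(\sum_{i=1}^k w_i - 1\big)$ to zero yields $2 w_i \sigma_i^2 = \lambda$ for each $i$, so $w_i \propto 1/\sigma_i^2$; imposing $\sum_{i=1}^k w_i = 1$ fixes the proportionality constant and gives exactly $w_i = (1/\sigma_i^2)\big/\sum_{j=1}^k (1/\sigma_j^2)$, as claimed. Equivalently, one can bypass multipliers using Cauchy--Schwarz: $1 = (\sum_i w_i)^2 = (\sum_i (w_i \sigma_i)(1/\sigma_i))^2 \le (\sum_i w_i^2 \sigma_i^2)(\sum_i 1/\sigma_i^2)$, which lower-bounds $\Var{\hat\mu}$ by $1/\sum_i (1/\sigma_i^2)$, with equality iff $w_i \sigma_i \propto 1/\sigma_i$, i.e. $w_i \propto 1/\sigma_i^2$.

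There is no deep obstacle here; the only point requiring care is confirming that the stationary point is genuinely the global minimizer rather than a saddle. This is handled either by noting that the objective is a positive-definite quadratic form restricted to an affine set (hence strictly convex with a unique minimizer), or by invoking the equality condition in the Cauchy--Schwarz argument. The remaining bookkeeping is routine, so I would keep the unbiasedness constraint explicitly coupled to the variance minimization throughout to ensure the final weights satisfy both conditions simultaneously.
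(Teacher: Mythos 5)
Your proposal is correct and follows essentially the same route as the paper's proof: reduce unbiasedness to the constraint $\sum_{i=1}^k w_i = 1$, use independence to write the variance as $\sum_{i=1}^k w_i^2\sigma_i^2$, and minimize via Lagrange multipliers. Your version is in fact slightly tighter --- you derive the constraint $\sum_i w_i = 1$ directly from requiring unbiasedness for all $\mu$ (the paper's normalization to $\mu = 0$ obscures where this constraint comes from), and your Cauchy--Schwarz/convexity remark certifies that the stationary point is the global minimizer, a step the paper leaves implicit.
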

\begin{proof}
Without loss of generality, assume that $\E{}{X_i} = 0$. Consider the (affine) linear combination $w_0 + \sum_{i=1}^k w_iX_i$. Observe that $\E{}{w_0 + \sum_{i=1}^k w_iX_i} = w_0$ since $\E{}{X_i} = 0$, for $1 \leq i \leq k$. Hence, for an unbiased estimator we can assume $w_0 = 0$. Furthermore, the variance of the linear combination is $\sum_{i=1}^k w_i^2\sigma_i^2$ and hence the sign of the $w_i$'s does not matter to the estimator. Thus, without loss of generality, for a BLUE we can restrict our attention to convex combinations of the form $ \sum_{i=1}^k w_iX_i$ such that $\sum_{i=1}^k w_i = 1$  and $0 \leq w_i \leq 1$, for $1 \leq i \leq k$.

We obtain the BLUE by minimizing the variance of the estimator, $\sum_1^n (w_i \sigma_i)^2$ where $\sum_1^n w_i = 1$. Using method of Lagrange Multipliers and solving we get $w_i = (1/\sigma_i^2) / \Sigma_1^n (1/\sigma_i^2)$ with the lowest variance being $1 / \Sigma_1^n (1/\sigma_i^2)$.
\end{proof}

\end{document}